
\documentclass[letterpaper, 10 pt, conference]{ieeeconf}  

\usepackage{subcaption}
\usepackage{caption}
\usepackage{mathptmx} 
\usepackage{times} 
\usepackage{amsmath} 

\usepackage{amssymb}  
\usepackage{mathrsfs}                                  
\usepackage{graphics}
\usepackage{graphicx} 
\usepackage{float}
\usepackage{soul}
\usepackage{comment}
\usepackage{algorithmicx}
\usepackage{algpseudocode}
\usepackage[ruled,vlined]{algorithm2e}
\usepackage[english]{babel}
\usepackage{tikz}
\usepackage{ftnxtra}
\usepackage{color}
\DeclareMathAlphabet{\mathcal}{OMS}{cmsy}{m}{n}
\newtheorem{remark}{Remark}
\newtheorem{theorem}{Theorem}

\newtheorem{definition}{Definition}

\newif\ifdraft
\draftfalse

\IEEEoverridecommandlockouts                              


\title{\LARGE \bf 
	An Invariant-EKF VINS Algorithm for Improving Consistency	
}

\author{Kanzhi Wu$ ^{\star}$, Teng Zhang$ ^{\star}$, Daobilige Su, Shoudong Huang and Gamini Dissanayake
\thanks{$ ^{\star} $ indicates the equal contributions of the first two authors.}
\thanks{Kanzhi Wu, Teng Zhang, Daobilige Su, Shoudong Huang and Gamini Dissanayake  are with the Center for Autonomous Systems, University of Technology Sydney, Ultimo, NSW 2007, Australia.
	 {\tt \small  \{Kanzhi.Wu, Teng.Zhang, Daobilige.Su, Shoudong.Huang, Gamini.Dissanayake\}@uts.edu.au}
  }  
}

\begin{document}

\maketitle

\begin{abstract}
The main contribution of this paper is an invariant extended Kalman filter (EKF) for visual inertial navigation systems (VINS). It is demonstrated that the conventional EKF based VINS is not invariant under \textit{the stochastic unobservable transformation}, associated with translations and a rotation about the gravitational direction. This can lead to inconsistent state estimates as the estimator does not obey a fundamental property of the physical system. To address this issue, we use a novel uncertainty representation to derive a Right Invariant error extended Kalman filter (RIEKF-VINS) that preserves this invariance property. RIEKF-VINS is then adapted to the multi-state constraint Kalman filter framework to obtain a consistent state estimator.
Both Monte Carlo simulations and real-world experiments are used to validate  the proposed method.
\end{abstract}

\section{INTRODUCTION}

Visual-Inertial Navigation Systems (VINS)  have been of significant interest to   the robotics community in the past decade, as
the  fusion of information from a camera and an inertial measurement unit (IMU) provides an effective and affordable  solution for navigation in GPS-denied environment. VINS algorithms can be classified into two categories, namely,  filter based  and  optimization based. Although 
there has been recent progress in the development of optimization based algorithms \cite{forster2015imu}\cite{okvis}, the extended Kalman filter (EKF) based solutions are still   extensively used (e.g., \cite{msckf}\cite{inertialSLAM2007}\cite{rovio}\cite{IMUcausal}) mainly as a result of their  efficiency and  simplicity. 

It is well known that conventional  EKF based Simultaneous Localization and Mapping algorithms (EKF-SLAM) \cite{Julier-2001}\cite{HuangT-RO2007}  suffer from  inconsistency. Similarly it has been shown that the conventional EKF VINS algorithm (ConEKF-VINS) using  point features in the environment is also inconsistent  resulting in the underestimation of the state uncertainty. This is closely related to the partial observability of these systems because conventional EKF algorithms
do not necessarily guarantee this fundamental property \cite{HuangT-RO2007}\cite{ocvins1} due to the linearized errors, which is the main reason for the overconfident estimates.   
This insight has been a catalyst for a number of  \textit{observability-constraint}  algorithms  (e.g., \cite{FEJ-VINS}\cite{hesch2013camera}\cite{OC-VINS2}), that explicitly enforces the unobservability of the system along specific directions via  the modifications to the Jacobian matrices. 
Although the \textit{observability-constraint}  algorithms improve 
the consistency and accuracy of the estimator to some extent \cite{askOCVINS}, extra computations in the update stage are required.
 Bloesch et al. in \cite{rovio}   propose a robot-centric formulation to alleviate the inconsistency. 
 Under the robot-centric formulation, the filter estimates the locations of landmarks in the local frame instead of that in the global frame. As a result,  the system becomes fully observable so that this issue is inherently avoided.
However, this formulation can result in  larger uncertainty and extra computations in the propagation stage, as discussed in \cite{GPHuang-consistencyIJRR}\cite{TengRIEKF}.  

Recently,  the manifold and Lie group representations for three-dimensional orientation/pose have been utilized for solving SLAM and VINS. Both  filter based algorithms (e.g., \cite{TengRIEKF}\cite{mahony2008nonlinear}\cite{carlone2015quaternion}) and  optimization based algorithms (e.g., \cite{forster2015imu}\cite{CoP}) can  benefit from the manifold representation and better accuracy can be achieved. The use of manifold  does not only allow much easier algebraic  computations (e.g., the computation of the Jacobian matrices) and avoid the representation singularity \cite{murray1994mathematical} but also have inspired a number of researchers to rethink the difference between the state representation and the state uncertainty representation, which is highlighted in  \cite{forster2015imu}\cite{TengRIEKF}. In fact, this insight is also 
intrinsically understood in  the well-known preintegration visual-inertial algorithm  \cite{Lupton} although the algorithm does not use  the manifold representation.
From  the viewpoint of control theory,  Aghannan and Rouchon in \cite{Bonnabel2002} propose 
a  framework for designing  symmetry-preserving observers on manifolds  by using a subtle geometrically adapted correction term. 
The fusion of the symmetry-preserving theory and EKF has resulted in  the invariant-EKF (I-EKF), which possesses the theoretical
local convergence property \cite{iekfstable} and preserves the same invariance property of the original system.  I-EKF based observers have been used in the inertial navigation \cite{iekfInertial} and the 2D EKF-SLAM \cite{Bonnabel2012}\cite{BarrauB15}. Our recent work \cite{TengRIEKF} also proves the significant improvement in the consistency through  
 a 3D I-EKF SLAM algorithm.  

In this paper,
we argue that 
the absence of the invariance affects the consistency of ConEKF-VINS estimates. There is a correspondence between this and the observability analysis reported in the previous literatures (e.g.,  \cite{IMUcausal}\cite{hesch2013camera}). The invariance in this 
refers to ``the output of the filter is invariant under any \textit{stochastic unobservable transformation}". For the VINS system, the unobservable transformation is the rotation about the gravitational direction and the translations. 
Adopting the I-EKF framework, we propose the Right Invariant error EKF VINS algorithm (RIEKF-VINS) and prove that it is invariant.
We then integrate RIEKF-VINS into the well-known visual-inertial odometry framework, i.e.,  the multi-state constraint Kalman filter (MSCKF) and remedy the inconsistency of the MSCKF algorithm. 
We  show using extensive Monte Carlo simulations the proposed method outperforms the original MSCKF, especially in terms of the consistency.
A preliminary real-world experiment also demonstrates the improved accuracy of the proposed method.

This paper is organized as follows. Section \ref{Section::stdEKFVINS} recalls the VINS system and gives an introduction of the ConEKF-VINS under the general continuous-discrete EKF. 
 Section \ref{Section::Consis} performs the consistency analysis of the general EKF algorithm based  on the invariance theory and proves the absence of the invariance of ConEKF-VINS. 
Section \ref{Section::RIEKF} proposes RIEKF-VINS with the extension to the MSCKF framework. 
Section \ref{Section::SimulationAndExp}  reports both the simulation and experiment results.
 Finally, Section \ref{Section::Conclusion} includes the main conclusions of this work and future work. Appendix provides some necessary formulas used in the proposed algorithms and the proofs of the theorems.

\textbf{Notations:}  Throughout this paper bold lower-case and upper-case letters are reserved for \textit{column vectors}  and matrices/tuples, respectively. To simplify the presentation, the vector transpose operators are omitted for the case $\mathbf{A}=\begin{bmatrix}
\mathbf{a}^\intercal, \mathbf{b}^\intercal ,\cdots , \mathbf{c}^\intercal 
\end{bmatrix}^\intercal$. 
The notation ${S}(\cdot )$ denotes the skew symmetric operator that transforms a 3-dimensional vector into a skew symmetric matrix: ${S}(\mathbf{x})\mathbf{y}=\mathbf{x} \times \mathbf{y}$ for $\mathbf{x}, \mathbf{y} \in\mathbb{R}^3$, where the notation $\times$ refers to the cross product.

\section{Background Knowledge}
\label{Section::stdEKFVINS}
In this section, we first provide an overview of the VINS system and then describe the ConEKF-VINS algorithm  based on the framework of the general continuous-discrete EKF.

\subsection{The VINS system}
The VINS system is used to estimate the state denoted as the tuple below
\begin{equation}
\mathbf{X}=\left ( \mathbf{R}, \mathbf{v}, \mathbf{p}, \mathbf{b}_g, \mathbf{b}_a,  \mathbf{f} \right)
\label{eq::X}
\end{equation}
where $ \mathbf{R} \in \mathbb{SO}(3) $ and $ \mathbf{p} \in \mathbb{R}^{3}$ are the orientation and  position of the IMU sensor, respectively,
$\mathbf{v} \in \mathbb{R}^{3}$ is the IMU velocity expressed in the global frame, $\mathbf{b}_g\in \mathbb{R}^{3}$ is the gyroscope bias, $\mathbf{b}_a\in \mathbb{R}^{3}$ is the accelerometer bias and 
 $ \mathbf{f} \in \mathbb{R}^{3}$ is the coordinates of the landmark  in the global frame. Note that only one landmark is included    
 the system state (\ref{eq::X})   for a more concise notation.  

\subsubsection{The continuous-time motion model}
The IMU measurements are usually used for state evolution due to its high frequency. 
The continuous-time motion model of the VINS system is given by the following  ordinary differential equations (ODEs): 
\begin{equation}
\begin{aligned}
&\dot{\mathbf{X}} = f( \mathbf{X}, \mathbf{u}, \mathbf{n}) \\
 =&  \left( \mathbf{R}S(\mathbf{w} - \mathbf{b}_g  -  \mathbf{n}_g ), \mathbf{R}(\mathbf{a}-\mathbf{b}_a - \mathbf{n}_a )+\mathbf{g} , \mathbf{v}, \mathbf{n}_{bg} , \mathbf{n}_{ba}  , \mathbf{0} \right)
\end{aligned}
\label{eq::VINSmotionmodel}
\end{equation}
where  $\mathbf{w}\in\mathbb{R}^3$ is the gyroscope reading,
 $ \mathbf{a} \in \mathbb{R}^3$ is the  accelerometer reading, 
  $\mathbf{g} \in \mathbf{R}^3$ is the global gravity vector (constant),
 and $\mathbf{n}= \begin{bmatrix}
\mathbf{n}_g, \mathbf{n}_{bg} , \mathbf{n}_a , \mathbf{n}_{ba}
\end{bmatrix}$ is  the system noise modeled as a white Gaussian noise with the covariance matrix $\mathbf{Q}$:  $ \mathbb{E}(\mathbf{n} (t)  \mathbf{n}(\tau)^\intercal) =  \mathbf{Q} \delta(t-\tau)$.
 Note that  $\mathbf{u}= ( \mathbf{w}, \mathbf{a}, \mathbf{g} )$ is the time-varying system \textit{input} and the IMU noise covariance $\mathbf{Q}$ is a constant matrix as prior knowledge.

\subsubsection{The discrete-time measurement model}
The visual measurement as the  system \textit{output} is discrete due to the low frequency of camera. 
After data association and rectification, 
 the visual measurement of the landmark at time-step $k\in \mathbb{N}$  is available and given by
\begin{equation}
	\mathbf{z}_k = h (\mathbf{X}_k, \mathbf{n}_z)    =  \mathfrak{h} (  \mathbf{R}_k^\intercal (\mathbf{f} -\mathbf{p}_k ) ) + \mathbf{n}_z
	\label{eq::VINSmeasurementmode}
\end{equation}   
 where $\mathbf{n}_z \sim \mathcal{N}(\mathbf{0}, \mathbf{V}_k)$ is the measurement noise. Note that
 $ \mathfrak{h} (\cdot) := \pi \circ \mathbf{T}_{CI}  $, where $\pi$ denotes the projection function and
     $\mathbf{T}_{CI}$ is the  transformation  from  the IMU frame to the camera frame.

\subsection{The general continuous-discrete EKF}

 Being a natural extension of the standard EKF, the general EKF allows more flexible uncertainty representation by the following:
\begin{equation}
\mathbf{X}=\hat{\mathbf{X}}\oplus \mathbf{e} \text{ and     } \mathbf{e} \sim  \mathcal{N}(\mathbf{0}, \mathbf{P} )
\end{equation}
 where $(\hat{\mathbf{X}}, \mathbf{P})$ can be  regarded as   the mean estimate and the covariance matrix,   $\mathbf{e}$ is 
a white Gaussian noise vector and the  notation $\oplus$ is  called retraction in differentiable geometry \cite{absil2007trust}, coupled with the inverse mapping $\ominus$. 
Note that  the \textit{user-defined} operators $\oplus$ and $\ominus$ need to be designed such that $\mathbf{X}=\mathbf{X}\oplus \mathbf{0}$ and  $\mathbf{e}=\mathbf{X}\ominus\hat{\mathbf{X}}$. Here we also highlight that the choice of the retraction $\oplus$ has a significant contribution to the performance of the filter, as discussed in our previous work \cite{TengRIEKF}. 

Once determining the retraction $\oplus$,  the process of the general continuous-discrete EKF
is similar to conventional continuous-discrete EKF, as summarized in  Alg. \ref{Alg::cd-EKF}.  
For propagation, we first calculate the \textit{time-varying} Jacobians matrices $\mathbf{F}$ and $\mathbf{G}$  from the linearized error-state propagation model:
 \begin{equation}
\dot{\mathbf{e}} = \mathbf{F} \mathbf{e} + \mathbf{G} \mathbf{n} + o(\| \mathbf{e} \| \| \mathbf{n} \|).
 \end{equation}
We then compute the \textit{state transition matrix} $\pmb{\Phi}_n:=\pmb{\Phi}(t_{n+1}, t_n) $ that  is the solution at time $t_{n+1}$ of the following ODE:
   \begin{equation}
\frac{d}{dt} { \pmb{\Phi} } (t , t_n ) = \mathbf{F}(t) \pmb{\Phi}(t,t_n)
   \end{equation}
 with the  condition $\pmb{\Phi}(t_n, t_n) = \mathbf{I}$ at time $t_n$. The matrix $\mathbf{Q}_{d,n}$ can be computed as
 \begin{equation}
\mathbf{Q}_{d,n}= \int_{t_n}^{t_{n+1}}  \pmb{\Phi}(t_{n+1}, \tau ) \mathbf{G}(\tau) \mathbf{Q} \mathbf{G}^\intercal(\tau)  \pmb{\Phi}^\intercal(t_{n+1}, \tau )  d\tau.
 \end{equation}  

\begin{algorithm}[]
	\KwIn{$\hat{\mathbf{X}}_n$, $\mathbf{P}_n$, $\mathbf{u}_{t_n : t_{n+1}}$, $\mathbf{z}_{n+1}$;     }
	\KwOut{   $\hat{\mathbf{X}}_{n+1}$, $\mathbf{P}_{n+1}$;   }
	\textbf{Propagation:} \\
	Turn off the system noise and 
	compute  $\hat{\mathbf{X}}_{n+1|n}$ with  $\hat{\mathbf{X}}_n$ and the ODEs (\ref{eq::VINSmotionmodel}); \\
$
\mathbf{P}_{n+1|n} \leftarrow   \pmb{\Phi}_n \mathbf{P}_{n}  \pmb{\Phi}^\intercal_n + \mathbf{Q}_{d,n}
$; \\
	\textbf{Update:} \\
	 $\mathbf{H}_{n+1}=\frac{\partial  h( \hat{\mathbf{X} }_{n+1|n} \oplus \mathbf{e}, \mathbf{0}  )  }{\partial \mathbf{e}}|_{\mathbf{e}=\mathbf{0} } $; \\
	$ \mathbf{S}\leftarrow \mathbf{H}_{n+1} \mathbf{P}_{n+1|n}   \mathbf{H}_{n+1}^\intercal+\mathbf{V}_{n+1}$,   
	$\mathbf{K} \leftarrow \mathbf{P}_{n+1|n} \mathbf{H}_{n+1}^\intercal \mathbf{S}^{-1} $;\\ 
	$\tilde{\mathbf{z}} \leftarrow h(\hat{\mathbf{X}}_{n+1|n}, \mathbf{0} )-\mathbf{z}_{n+1} $; \\
	$\hat{\mathbf{X}}_{n+1} \leftarrow  \hat{\mathbf{X}}_{n+1|n} \oplus  \mathbf{ K}\tilde{\mathbf{z}}    $, ${\mathbf{P}}_{n+1} \leftarrow (\mathbf{I}-\mathbf{KH}_{n+1})\mathbf{P}_{n+1|n}$;\\
	\caption{The general continuous-discrete EKF}
	\label{Alg::cd-EKF}
\end{algorithm}

\subsection{ConEKF-VINS}

ConEKF-VINS  \cite{ocvins1} can be regarded as an instance of the general EKF algorithm (Alg. \ref{Alg::cd-EKF}). 
In ConEKF-VINS,  the uncertainty  representation is  defined as
\begin{equation}
\begin{aligned}
	&\mathbf{X}=\hat{\mathbf{X}} \oplus \mathbf{e}   \\
	=& \left ( \hat{\mathbf{R}} \exp( \mathbf{e}_\theta ) , \hat{\mathbf{v}}+\mathbf{e}_v, \hat{\mathbf{p}}+\mathbf{e}_p, \hat{\mathbf{b}}_g+\mathbf{e}_{bg}, \hat{\mathbf{b}}_a+\mathbf{e}_{ba},  \hat{\mathbf{f}}+ \mathbf{e}_f \right)
\end{aligned}
\label{eq::stdVINSoplus}
\end{equation}
where 
$ \mathbf{e} =  \begin{bmatrix}
\mathbf{e}_\theta  , \mathbf{e}_v, \mathbf{e}_p ,\mathbf{e}_{bg}, \mathbf{e}_{ba}   , \mathbf{e}_f
\end{bmatrix} \sim \mathcal{N}(\mathbf{0},\mathbf{P}) $ and $\exp(\cdot)$ transforms a 3-dimensional vector into a rotation matrix, given in (\ref{eq::expS}).  The matrices  $\pmb{\Phi}_n$, $\mathbf{Q}_{d,n} $ and $\mathbf{H}_{n+1}$ 
are omitted here due to space reasons, which can be straightforwardly calculated in the sense of the uncertainty representation (\ref{eq::stdVINSoplus}).
Please  refer to  \cite{ocvins1} for more details.

\section{Consistency Analysis}
\label{Section::Consis}

In this section, we first introduce the concepts of unobservable transformation,  invariance and observability. 
We then perform the consistency analysis for the general EKF filter and  prove that ConEKF-VINS  does \textit{not} have the expected invariance property. Moreover, we also discuss the relationship between invariance and consistency.

\subsection{Unobservability, unobservable transformation and invariance of the VINS system}

The concept observability of nonlinear systems can be traced to the early literature \cite{hermann1977nonlinear}.
As discussed in the literatures \cite{IMUcausal}\cite{CIBijrr}\cite{martinelli2013observability},
 the state   (\ref{eq::X}) of the VINS system is not locally observable. To make it more intuitive, we introduce the unobservability of the VINS system  based on the \textit{unobservable transformation} rather than   the \textit{observability rank criterion}  reported in \cite{hermann1977nonlinear}. 

\begin{definition}
The transformation $\mathcal{T}$ is  called to be an \textbf{unobservable transformation} for the VINS system and 
the output of the VINS system (\ref{eq::X})--(\ref{eq::VINSmeasurementmode}) is \textbf{invariant} under $\mathcal{T}$ when the following condition is satisfied:
For arbitrary $t_i$ such that $ \mathbf{Y}(t_i)=  \mathcal{T} (\mathbf{X}(t_i)) $, we have 
$h(\mathbf{X}(t_n), \mathbf{0} ) = h(\mathbf{Y}(t_n), \mathbf{0})$  $\forall\text{ } n \geq i $ where the notations 
$\mathbf{X}(\cdot)$ and $\mathbf{Y}(\cdot)$ denote the two evoluted  trajectories that follow the same ODEs  (\ref{eq::VINSmotionmodel}) with the conditions $\mathbf{X}(t_i)$ and $\mathbf{Y}(t_i)$ at time $t_i$, respectively. 
On the other hand, the system is called to be unobservable if there exists an \textit{unobservable transformation}.
\end{definition}

\begin{remark}
One can see that an unobservable system is always accompanied by an \textit{unobservable transformation}. And the invariance to   the  \textit{unobservable transformation} is a more detailed description of the unobservability. 
\end{remark}

\begin{definition}
	\label{def:un}
	For the system state (\ref{eq::X}), a \textbf{stochastic transformation of translation and rotation (about the gravitational direction)} $\mathcal{T}_{\mathbf{S}}$  is a mapping:
	\begin{equation}
	\begin{aligned}
	&\mathcal{T}_{\mathbf{S}} (\mathbf{X})=( \exp(\mathbf{g} ( \epsilon_1+\theta_1 ))\mathbf{R}, 
	\exp(\mathbf{g}( \epsilon_1+\theta_1 ))\mathbf{v},\\& \exp(\mathbf{g}( \epsilon_1+\theta_1 )) \mathbf{p}+ \pmb{\theta}_2+\pmb{\epsilon}_2,\\
	&  \mathbf{b}_g, \mathbf{b}_a,   \exp(\mathbf{g}( \epsilon_1+\theta_1 )) \mathbf{f}+ \pmb{\theta}_2+\pmb{\epsilon}_2     )
	\end{aligned}
	\label{eq::st}
	\end{equation}	
	where  
	 $\mathbf{S}= (  \pmb{\theta}, \pmb{\epsilon} )$, $\theta_1 \in \mathbb{R}$, $\pmb{\theta}_2 \in \mathbb{R}^3$, $\pmb{\theta}= \begin{bmatrix}
    \theta_1 , \pmb{\theta}_2
	\end{bmatrix} \in \mathbb{R}^4$, $\epsilon_1 \in \mathbb{R}$, $\pmb{\epsilon}_2\in \mathbb{R}^3$ and 
	$\pmb{\epsilon}= \begin{bmatrix}
	\epsilon_1 , \pmb{\epsilon}_2
	\end{bmatrix} \in \mathbb{R}^4 $ is a white Gaussian noise with the covariance $\pmb{\Sigma}$.
	    $\mathcal{T}_{\mathbf{S}}$ degenerates into the \textbf{deterministic  transformation $\mathcal{T}_{\mathbf{D}}$} ($\mathbf{D}= (  \pmb{\theta}, \mathbf{0} )$) under the condition ${\pmb{\Sigma}}=\mathbf{0}$. $\mathcal{T}_{\mathbf{S}}$ degenerates into a \textbf{stochastic identity transformation} under the condition 
	    $ \pmb{\theta} = \mathbf{0}$,  .
\end{definition}

\begin{theorem}
	\label{theorem::vins}
The stochastic transformation 	$\mathcal{T}_{\mathbf{S}}$ is an unobservable transformation to the VINS system (\ref{eq::X})--(\ref{eq::VINSmeasurementmode}). 
\end{theorem}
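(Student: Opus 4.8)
The plan is to verify the two defining requirements of an unobservable transformation directly: first, that $\mathcal{T}_{\mathbf{S}}$ maps every trajectory of the motion model to another trajectory of the \emph{same} model with the same input (flow-equivariance), and second, that the measurement output is left unchanged along such trajectories. To streamline the bookkeeping I would abbreviate the induced rotation and translation as $\mathbf{R}_\phi := \exp(\mathbf{g}(\epsilon_1+\theta_1))$ and $\mathbf{t} := \pmb{\theta}_2 + \pmb{\epsilon}_2$, so that $\mathcal{T}_{\mathbf{S}}(\mathbf{X}) = (\mathbf{R}_\phi\mathbf{R},\, \mathbf{R}_\phi\mathbf{v},\, \mathbf{R}_\phi\mathbf{p}+\mathbf{t},\, \mathbf{b}_g,\, \mathbf{b}_a,\, \mathbf{R}_\phi\mathbf{f}+\mathbf{t})$. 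I denote by $(\mathbf{R}',\mathbf{v}',\mathbf{p}',\mathbf{b}_g',\mathbf{b}_a',\mathbf{f}')$ the components of $\mathbf{Y} := \mathcal{T}_{\mathbf{S}}(\mathbf{X})$.

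For the flow-equivariance, suppose $\mathbf{X}(t)$ solves the ODEs (\ref{eq::VINSmotionmodel}) for a given input $\mathbf{u}$ and noise realization $\mathbf{n}$, and set $\mathbf{Y}(t) := \mathcal{T}_{\mathbf{S}}(\mathbf{X}(t))$ with fixed transformation parameters. Differentiating each component of $\mathbf{Y}$ and substituting $\dot{\mathbf{X}} = f(\mathbf{X},\mathbf{u},\mathbf{n})$, I would check term by term that $\dot{\mathbf{Y}} = f(\mathbf{Y},\mathbf{u},\mathbf{n})$: the orientation equation follows because $\mathbf{R}_\phi$ is constant and passes through the left multiplication, $\dot{\mathbf{R}}' = \mathbf{R}_\phi\mathbf{R}\,S(\mathbf{w}-\mathbf{b}_g-\mathbf{n}_g) = \mathbf{R}'S(\mathbf{w}-\mathbf{b}_g'-\mathbf{n}_g)$; the position and landmark equations follow because the additive translation $\mathbf{t}$ is constant (it cancels in $\dot{\mathbf{p}}' = \mathbf{R}_\phi\mathbf{v} = \mathbf{v}'$ and leaves $\dot{\mathbf{f}}'=\mathbf{0}$); the two bias components are fixed by $\mathcal{T}_{\mathbf{S}}$ and so match trivially. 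Since $\mathbf{Y}(t_i) = \mathcal{T}_{\mathbf{S}}(\mathbf{X}(t_i))$ and $\mathbf{Y}(\cdot)$ satisfies the same ODE with this initial condition, uniqueness of solutions yields $\mathbf{Y}(t_n) = \mathcal{T}_{\mathbf{S}}(\mathbf{X}(t_n))$ for all $n \geq i$.

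The single nontrivial identity, and the main obstacle, is the velocity equation: $\dot{\mathbf{v}}' = \mathbf{R}_\phi\big[\mathbf{R}(\mathbf{a}-\mathbf{b}_a-\mathbf{n}_a)+\mathbf{g}\big]$ must equal $\mathbf{R}'(\mathbf{a}-\mathbf{b}_a-\mathbf{n}_a)+\mathbf{g}$, which forces $\mathbf{R}_\phi\mathbf{g}=\mathbf{g}$. This is precisely why the admissible rotation is constrained to be about the gravitational direction: because $S(\mathbf{g})\mathbf{g}=\mathbf{g}\times\mathbf{g}=\mathbf{0}$, every term of the series defining $\exp(\mathbf{g}(\epsilon_1+\theta_1))$ annihilates $\mathbf{g}$ beyond the identity term, so $\mathbf{R}_\phi$ fixes its own axis $\mathbf{g}$; a rotation about any other axis would break the invariance through this gravity term. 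I would state this fixed-axis property as the key lemma underpinning the whole argument.

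Finally, for measurement invariance I would compute $h(\mathbf{Y}(t_n),\mathbf{0}) = \mathfrak{h}\big((\mathbf{R}')^\intercal(\mathbf{f}'-\mathbf{p}')\big)$ and observe that the translation cancels, $\mathbf{f}'-\mathbf{p}' = \mathbf{R}_\phi(\mathbf{f}-\mathbf{p})$, while $(\mathbf{R}')^\intercal\mathbf{R}_\phi = \mathbf{R}^\intercal\mathbf{R}_\phi^\intercal\mathbf{R}_\phi = \mathbf{R}^\intercal$, so the argument of $\mathfrak{h}$ reduces to $\mathbf{R}^\intercal(\mathbf{f}-\mathbf{p})$ and hence $h(\mathbf{Y}(t_n),\mathbf{0}) = h(\mathbf{X}(t_n),\mathbf{0})$. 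Combining this with the flow-equivariance of the previous step establishes exactly the condition in the definition, proving that $\mathcal{T}_{\mathbf{S}}$ is an unobservable transformation. Note that the parameters $(\theta_1,\pmb{\theta}_2,\epsilon_1,\pmb{\epsilon}_2)$ enter only through the fixed combinations $\mathbf{R}_\phi$ and $\mathbf{t}$, so the argument is identical in the deterministic and stochastic cases and no separate treatment of the noise terms $\pmb{\epsilon}$ is required.
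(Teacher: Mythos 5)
Your proposal is correct and is exactly the direct verification the paper leaves implicit (its proof reads only ``It can be straightforwardly verified''): you check flow-equivariance of $\mathcal{T}_{\mathbf{S}}$ component by component, isolate the one nontrivial identity $\exp(\mathbf{g}(\epsilon_1+\theta_1))\mathbf{g}=\mathbf{g}$ coming from the velocity equation, and confirm cancellation of the rotation and translation in the measurement map. Nothing is missing, and the closing observation that the stochastic case reduces pathwise to the deterministic one is the right way to dispose of the noise parameters $\pmb{\epsilon}$.
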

\begin{proof}
It can be straightforwardly verified.
\end{proof}
 
\begin{remark}
 Theorem \ref{theorem::vins} corresponds to the conclusion in \cite{IMUcausal}\cite{martinelli2012vision} that the IMU yaw angle and the IMU position  are (locally) unobservable. 
\end{remark}

\subsection{The invariance of the general EKF based filter}

The general EKF based filter is \textit{not} a linear system for the estimated state $\hat{\mathbf{X}}$. However, the invariance of the filter  can be described as the following:
\begin{definition}
	The output  of a general EKF framework based filter (Alg. \ref{Alg::cd-EKF}) for the VINS system is  invariant under any \textbf{stochastic unobservable transformation} $\mathcal{T}_{\mathbf{S}}$ if the  following condition is satisfied:
	  for any  two  estimates $(\hat{\mathbf{X}}_i, \mathbf{P}_i)$ and  $ ( \hat{\mathbf{Y}}_i, \mathbf{P}y_i ) $ at time-step $i$, where  $ \hat{\mathbf{Y}}_i= \mathcal{T}_{\mathbf{S}} (  \hat{\mathbf{X}}_i ) $ and $ \mathbf{P}y_i= \mathbf{M}_i \mathbf{P}_i\mathbf{M}^\intercal_i+ \mathbf{N}_i {\pmb{\Sigma}}\mathbf{N}^\intercal_i  $ in which  \begin{equation}
	\begin{aligned}
	\mathbf{M}_i&:=\left.\frac{\partial  \mathcal{T}_{{\mathbf{D}}} (\hat{\mathbf{X}}_i \oplus \mathbf{e}  ) \ominus  \mathcal{T}_{\mathbf{D}} (\hat{\mathbf{X}}_i )  }{\partial \mathbf{e}}\right\rvert_{\mathbf{e}=\mathbf{0} }
	\end{aligned}
	\end{equation}
	and
	\begin{equation}
		\mathbf{N}_i:=\left.\frac{\partial  \mathcal{T}_{\mathbf{S}} (\hat{\mathbf{X}}_i ) \ominus  \mathcal{T}_{\mathbf{D}} (\hat{\mathbf{X}}_i )  }{\partial \pmb{\epsilon}  }\right\rvert_{\pmb{\epsilon}=\mathbf{0} } 
	\end{equation}
	we have $h(\hat{\mathbf{X}}_{n},\mathbf{0})=h(\hat{\mathbf{Y}}_{n},\mathbf{0})$ for all $n \geq i$. The notations  
	$\hat{\mathbf{X}}_{n}$ and $\hat{\mathbf{Y}}_{n}$ above represent the \textit{mean} estimate of this  filter at time-step $n$
	by using the same input $\mathbf{u}$ from time $t_i$ to $t_n$,  from  the  conditions $(\hat{\mathbf{X}}_i, \mathbf{P}_i)$ and $(\hat{\mathbf{Y}}_i, \mathbf{P}y_i)$ at time-step $i$,  respectively. 
\end{definition}

As shown in Def. 1 and Def. 2,  the invariance to any stochastic transformation $\mathcal{T}_{\mathbf{S}}$ can be divided into two properties: 1) \textbf{the invariance to any deterministic  transformation $\mathcal{T}_{\mathbf{D}}$} and 2) \textbf{the invariance to any stochastic identity transformation}.
The following two theorems  analytically provide the methods to judge whether a general EKF based filter has the  two invariances properties above.

\begin{theorem}
	 \label{theorem:invariancedet}
 The output of the general EKF based filter for the VINS system is invariant under any  deterministic unobservable  transformation only if for each  deterministic unobservable  transformation $\mathcal{T}_{\mathbf{D}}$, there exists an invertible matrix $\mathbf{W}_{\mathbf{D}}$ (unrelated to $\mathbf{X}$)  such that
 \begin{equation}
\mathcal{T}_{\mathbf{D}}(\mathbf{X} \oplus \mathbf{e} ) =\mathcal{T}_{\mathbf{D}}( \mathbf{X} ) \oplus  \mathbf{W}_{\mathbf{D}}\mathbf{e}.
\label{eq::deter}
 \end{equation}
\end{theorem}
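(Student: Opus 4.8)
The plan is to establish the necessary condition (the ``only if'' direction) by an induction over the steps of Alg.~\ref{Alg::cd-EKF}, carrying the hypothesis that the two filters stay related by $\hat{\mathbf{Y}}_n=\mathcal{T}_{\mathbf{D}}(\hat{\mathbf{X}}_n)$ and $\mathbf{P}y_n=\mathbf{M}_n\mathbf{P}_n\mathbf{M}_n^\intercal$, and isolating exactly what the recursion needs in order to preserve the output equality $h(\hat{\mathbf{X}}_n,\mathbf{0})=h(\hat{\mathbf{Y}}_n,\mathbf{0})$ for every admissible input, measurement and initial covariance. First I would dispose of the propagation stage: since $\mathcal{T}_{\mathbf{D}}$ is a deterministic unobservable transformation, Theorem~\ref{theorem::vins} (with its explicit form in Def.~\ref{def:un} at $\pmb{\Sigma}=\mathbf{0}$) guarantees that $\mathcal{T}_{\mathbf{D}}$ commutes with the noise-free flow of the ODE~(\ref{eq::VINSmotionmodel}), so the propagated means automatically satisfy $\hat{\mathbf{Y}}_{n+1|n}=\mathcal{T}_{\mathbf{D}}(\hat{\mathbf{X}}_{n+1|n})$ with no extra assumption. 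Consequently the binding constraint can only come from the measurement update, which is the one place where the mean and the covariance become coupled through the Kalman gain.

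At the update I would compute the ``$\mathbf{Y}$'' quantities in terms of the ``$\mathbf{X}$'' quantities under the working relation (\ref{eq::deter}) with a constant invertible $\mathbf{W}_{\mathbf{D}}$. Combining $h\circ\mathcal{T}_{\mathbf{D}}=h$ with (\ref{eq::deter}) and differentiating gives the Jacobian identity $\mathbf{H}^{y}\mathbf{W}_{\mathbf{D}}=\mathbf{H}^{x}$ and the innovation identity $\tilde{\mathbf{z}}^{y}=\tilde{\mathbf{z}}^{x}$; together with $\mathbf{P}y^{-}=\mathbf{W}_{\mathbf{D}}\mathbf{P}^{-}\mathbf{W}_{\mathbf{D}}^\intercal$ these collapse the innovation covariance to $\mathbf{S}^{y}=\mathbf{S}^{x}$ and the gain to $\mathbf{K}^{y}=\mathbf{W}_{\mathbf{D}}\mathbf{K}^{x}$, so that $\mathbf{K}^{y}\tilde{\mathbf{z}}^{y}=\mathbf{W}_{\mathbf{D}}(\mathbf{K}^{x}\tilde{\mathbf{z}}^{x})$. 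Applying (\ref{eq::deter}) once more then yields $\hat{\mathbf{Y}}_{n+1}=\mathcal{T}_{\mathbf{D}}(\hat{\mathbf{X}}_{n+1})$ and $\mathbf{P}y_{n+1}=\mathbf{W}_{\mathbf{D}}\mathbf{P}_{n+1}\mathbf{W}_{\mathbf{D}}^\intercal$, closing the induction and giving the output equality because $\mathcal{T}_{\mathbf{D}}$ preserves $h$. This bookkeeping makes transparent that a constant $\mathbf{W}_{\mathbf{D}}$ obeying (\ref{eq::deter}) is \emph{sufficient}; the substance of the theorem is to run the same argument in reverse and show it is also necessary.

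For the necessity I would exploit the universal quantifiers in the definition of filter invariance: the equality $h(\hat{\mathbf{X}}_n,\mathbf{0})=h(\hat{\mathbf{Y}}_n,\mathbf{0})$ must hold for \emph{every} $\mathbf{P}_i$, every input $\mathbf{u}$ and every measurement $\mathbf{z}$. Letting the covariance, input and measurement vary lets me range the posterior increment $\mathbf{K}^{x}\tilde{\mathbf{z}}^{x}$ over a full neighbourhood of $\mathbf{0}$ in the error space at a chosen base state, which should upgrade the pointwise Jacobian $\mathbf{M}_n$ into the statement that $\mathcal{T}_{\mathbf{D}}(\hat{\mathbf{X}}\oplus\mathbf{e})\ominus\mathcal{T}_{\mathbf{D}}(\hat{\mathbf{X}})$ is genuinely linear in $\mathbf{e}$ rather than linear only to first order; varying the base state $\hat{\mathbf{X}}$ should then force this linear map to be one and the same matrix $\mathbf{W}_{\mathbf{D}}$ at every state, with invertibility inherited from $\mathcal{T}_{\mathbf{D}}$ being a diffeomorphism. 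I expect this last passage to be the main obstacle, because the output map $h$ is itself only partially observable, so $h(\hat{\mathbf{X}}_n,\mathbf{0})=h(\hat{\mathbf{Y}}_n,\mathbf{0})$ pins down $\hat{\mathbf{Y}}_n$ only up to a further unobservable transformation rather than forcing $\hat{\mathbf{Y}}_n=\mathcal{T}_{\mathbf{D}}(\hat{\mathbf{X}}_n)$ outright; I would close this gap by chaining several update steps with independent measurement directions, so that the accumulated innovation constraints remove the residual unobservable freedom and thereby expose the constant, state-independent $\mathbf{W}_{\mathbf{D}}$.
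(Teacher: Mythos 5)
Your second paragraph reproduces, in essence, the paper's own argument: assume (\ref{eq::deter}), derive $\mathbf{H}y_{i+1}=\mathbf{H}_{i+1}\mathbf{W}_{\mathbf{D}}^{-1}$ and $\mathbf{K}y=\mathbf{W}_{\mathbf{D}}\mathbf{K}$, push the update through $\oplus$, and close an induction with $\hat{\mathbf{Y}}_{n}=\mathcal{T}_{\mathbf{D}}(\hat{\mathbf{X}}_{n})$ and $\mathbf{P}y_{n}=\mathbf{W}_{\mathbf{D}}\mathbf{P}_{n}\mathbf{W}_{\mathbf{D}}^\intercal$. That part is correct and is exactly what Appendix~\ref{proof::determinsticVINS} does; note that the paper explicitly states it ``only prove[s] the sufficient condition,'' i.e., despite the ``only if'' wording of Theorem~\ref{theorem:invariancedet}, the proof on record is that (\ref{eq::deter}) \emph{implies} invariance, not the converse.

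The genuine gap is in your third paragraph, the necessity direction. You correctly identify the obstruction: since $h\circ\mathcal{T}_{\mathbf{D}}=h$, the constraint $h(\hat{\mathbf{X}}_n,\mathbf{0})=h(\hat{\mathbf{Y}}_n,\mathbf{0})$ determines $\hat{\mathbf{Y}}_n$ only up to a further unobservable transformation. But your proposed repair --- chaining several updates with ``independent measurement directions'' to remove the residual freedom --- cannot work as stated: the residual freedom lies precisely along the unobservable directions of Definition~\ref{def:un}, and by Theorem~\ref{theorem::vins} no sequence of outputs, however long or varied, distinguishes two trajectories related by an unobservable transformation. So accumulating innovation constraints can never pin $\hat{\mathbf{Y}}_n$ down to $\mathcal{T}_{\mathbf{D}}(\hat{\mathbf{X}}_n)$, and the passage from output equality to the exact, state-independent linearity in (\ref{eq::deter}) remains unestablished. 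A further unaddressed point even in the sufficiency half is the covariance propagation step $\mathbf{P}y_{i+1|i}=\mathbf{W}_{\mathbf{D}}\mathbf{P}_{i+1|i}\mathbf{W}_{\mathbf{D}}^\intercal$, which requires the intertwining relations $\pmb{\Phi}y_i\mathbf{W}_{\mathbf{D}}=\mathbf{W}_{\mathbf{D}}\pmb{\Phi}_i$ and $\mathbf{Q}y_{d,i}=\mathbf{W}_{\mathbf{D}}\mathbf{Q}_{d,i}\mathbf{W}_{\mathbf{D}}^\intercal$; these do follow from (\ref{eq::deter}) together with the equivariance of the noise-free flow, but you (like the paper) assert rather than derive them. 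In summary: your sufficiency argument matches the paper's; your necessity argument, which is the direction the theorem statement literally asserts and which the paper does not prove, contains a gap that your proposed fix does not close.
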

\begin{proof}
	See Appendix \ref{proof::determinsticVINS}.
\end{proof}

\begin{theorem}
		\label{theorem:invariancesto} 
	 The output of the general EKF based filter for the VINS system is invariant under any  stochastic identity  transformation only if  
	 \begin{equation}
	 \mathbf{H}_{n+i+1}\pmb{\Phi}_{n+i}\pmb{\Phi}_{n+i-1}\cdots\pmb{\Phi}_{i} \mathbf{N}_i = \mathbf{0} \text{ } \forall\text{ } n\text{ and } i\geq 0.
	 \label{eq::sto}
	 \end{equation}
\end{theorem}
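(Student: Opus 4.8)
The plan is to exploit the fact that a stochastic identity transformation changes only the covariance, not the mean, and then to track how this covariance discrepancy leaks into the mean trajectory through the Kalman gains. Since $\mathcal{T}_{\mathbf{S}}$ is a stochastic identity transformation, its deterministic part $\mathcal{T}_{\mathbf{D}}$ is the identity map, so $\mathbf{M}_i=\mathbf{I}$ and the two runs start from $\hat{\mathbf{Y}}_i=\hat{\mathbf{X}}_i$ with $\mathbf{P}y_i=\mathbf{P}_i+\mathbf{N}_i\pmb{\Sigma}\mathbf{N}_i^\intercal$. Writing $\pmb{\Delta}_n:=\mathbf{P}y_n-\mathbf{P}_n$, we have $\pmb{\Delta}_i=\mathbf{N}_i\pmb{\Sigma}\mathbf{N}_i^\intercal$. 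Because both filters consume the same input $\mathbf{u}$ and the same measurements, and the mean update of Alg. \ref{Alg::cd-EKF} depends on the covariance only through the gain $\mathbf{K}$, the entire discrepancy between the two mean trajectories is driven by $\pmb{\Delta}$. I would carry out the whole argument to first order in $\pmb{\Sigma}$: since invariance requires the output discrepancy to vanish exactly, its first-order term in $\pmb{\Sigma}$ must vanish, and it is precisely this term that the observability product in (\ref{eq::sto}) controls.

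The core is a clean-propagation lemma. At a prediction step satisfying $\mathbf{H}_{n}\pmb{\Delta}_{n|n-1}=\mathbf{0}$ (equivalently $\pmb{\Delta}_{n|n-1}\mathbf{H}_{n}^\intercal=\mathbf{0}$, since $\pmb{\Delta}$ is symmetric), the innovation covariances coincide, $\mathbf{S}_Y=\mathbf{H}_n(\mathbf{P}_{n|n-1}+\pmb{\Delta}_{n|n-1})\mathbf{H}_n^\intercal+\mathbf{V}_n=\mathbf{S}_X$, and hence so do the gains, $\mathbf{K}_Y=\mathbf{K}_X$; the posterior means therefore stay equal. Moreover the update gives $\pmb{\Delta}_n=(\mathbf{I}-\mathbf{K}_n\mathbf{H}_n)\pmb{\Delta}_{n|n-1}=\pmb{\Delta}_{n|n-1}$, so the update leaves the discrepancy untouched, and propagation yields $\pmb{\Delta}_{n+1|n}=\pmb{\Phi}_n\pmb{\Delta}_n\pmb{\Phi}_n^\intercal$. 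Consequently, as long as the annihilation holds at every intermediate step, the discrepancy remains the ``clean'' quantity $\pmb{\Delta}_{n+i+1|n+i}=\pmb{\Phi}_{n+i}\cdots\pmb{\Phi}_i\,\mathbf{N}_i\pmb{\Sigma}\mathbf{N}_i^\intercal\,\pmb{\Phi}_i^\intercal\cdots\pmb{\Phi}_{n+i}^\intercal$.

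Necessity is then proved by induction on $n$. Assuming the identities for all indices below $n$, the lemma guarantees that the means have remained equal and that $\pmb{\Delta}_{n+i+1|n+i}$ has the clean form above. Suppose, for contradiction, $\mathbf{H}_{n+i+1}\pmb{\Delta}_{n+i+1|n+i}\neq\mathbf{0}$. Then the gains differ by $\mathbf{K}_Y-\mathbf{K}_X=(\mathbf{I}-\mathbf{K}\mathbf{H})\pmb{\Delta}\mathbf{H}^\intercal\mathbf{S}^{-1}$ to first order, producing a posterior mean difference $(\mathbf{K}_Y-\mathbf{K}_X)\tilde{\mathbf{z}}$; feeding this into the measurement Jacobian $\mathbf{H}_{n+i+1}$ and using the identity $\mathbf{H}(\mathbf{I}-\mathbf{K}\mathbf{H})=\mathbf{V}\mathbf{S}^{-1}\mathbf{H}$ yields an output discrepancy $\mathbf{V}\mathbf{S}^{-1}\mathbf{H}\pmb{\Delta}\mathbf{H}^\intercal\mathbf{S}^{-1}\tilde{\mathbf{z}}$. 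Invariance forces this to vanish for every innovation $\tilde{\mathbf{z}}$ (the measurement $\mathbf{z}_{n+i+1}$ is arbitrary), so $\mathbf{H}\pmb{\Delta}\mathbf{H}^\intercal=\mathbf{0}$ by invertibility of $\mathbf{V}$ and $\mathbf{S}$; since $\pmb{\Delta}\succeq\mathbf{0}$ this is equivalent to $\mathbf{H}\pmb{\Delta}=\mathbf{0}$, a contradiction.

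Finally I would substitute the clean form of $\pmb{\Delta}$ into $\mathbf{H}_{n+i+1}\pmb{\Delta}_{n+i+1|n+i}=\mathbf{0}$, obtaining $\mathbf{H}_{n+i+1}\pmb{\Phi}_{n+i}\cdots\pmb{\Phi}_i\mathbf{N}_i\,\pmb{\Sigma}\,(\cdots)^\intercal=\mathbf{0}$, and strip $\pmb{\Sigma}$ using the elementary fact that $\mathbf{A}\pmb{\Sigma}\mathbf{A}^\intercal=\mathbf{0}$ with $\pmb{\Sigma}\succ\mathbf{0}$ forces $\mathbf{A}=\mathbf{0}$; letting $i$ and $n$ range then gives (\ref{eq::sto}). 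The main obstacle is the bookkeeping of the induction — keeping $\pmb{\Delta}$ in its clean propagated form at every step so that the gain-difference expression isolates exactly $\mathbf{H}\pmb{\Delta}=\mathbf{0}$ — together with the care needed to justify the first-order reduction and the stripping of $\tilde{\mathbf{z}}$ and $\pmb{\Sigma}$ by genericity and positive-definiteness.
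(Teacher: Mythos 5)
Your proposal is correct in substance, but it is worth separating its two halves, because only one of them appears in the paper. Your ``clean-propagation lemma'' --- if $\mathbf{H}_{j+1}\pmb{\Phi}_{j}\cdots\pmb{\Phi}_{i}\mathbf{N}_i=\mathbf{0}$ at every step then the innovation covariances and gains coincide, the means stay equal, and the discrepancy propagates as $\pmb{\Phi}_{n}\cdots\pmb{\Phi}_{i}\mathbf{N}_i\pmb{\Sigma}\mathbf{N}_i^\intercal\pmb{\Phi}_{i}^\intercal\cdots\pmb{\Phi}_{n}^\intercal$ --- is, almost line for line, the entirety of the paper's own appendix argument, which explicitly states that it ``only prove[s] the sufficient condition,'' i.e., that (\ref{eq::sto}) implies invariance. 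What you add is the converse, which is what the theorem as written (``invariant \ldots only if (\ref{eq::sto})'') literally asserts, and which the paper implicitly relies on when it invokes this theorem to conclude that ConEKF-VINS is \emph{not} invariant: your first-order gain perturbation $\mathbf{K}_Y-\mathbf{K}_X=(\mathbf{I}-\mathbf{K}\mathbf{H})\pmb{\Delta}\mathbf{H}^\intercal\mathbf{S}^{-1}$, the identity $\mathbf{H}(\mathbf{I}-\mathbf{K}\mathbf{H})=\mathbf{V}\mathbf{S}^{-1}\mathbf{H}$, and the reduction of $\mathbf{H}\pmb{\Delta}\mathbf{H}^\intercal=\mathbf{0}$ to $\mathbf{H}\pmb{\Delta}=\mathbf{0}$ via positive semidefiniteness are all sound, and the induction on the first failing step is coherent. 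So your route subsumes the paper's and closes a direction the paper leaves unproved. The only points to tighten are the ones you already flag: the passage to first order in $\pmb{\Sigma}$ (cleanly justified by differentiating the exact output discrepancy along $\pmb{\Sigma}=s\pmb{\Sigma}_0$ at $s=0$, since invariance must hold for every $\pmb{\Sigma}\succeq\mathbf{0}$), and the genericity of the innovation $\tilde{\mathbf{z}}$, which requires reading invariance as a property holding for all admissible measurement sequences --- consistent with, though not made explicit in, the paper's definition of filter invariance.
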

\begin{proof}
	See Appendix \ref{proof::stochasticVINS}.
\end{proof}
By using the theorems above, we can easily determine the invariance  properties of ConEKF-VINS. 
\begin{theorem}
 ConEKF-VINS  satisfies (\ref{eq::deter})  but does not satisfies (\ref{eq::sto}). Hence,  ConEKF-VINS has the invariance to any deterministic unobservable  transformation $\mathcal{T}_{\mathbf{D}}$  but not the invariance to  stochastic identity  transformations.  In all, the output of  ConEKF-VINS is not invariant under {stochastic unobservable transformation} $\mathcal{T}_{\mathbf{S}}$.
\end{theorem}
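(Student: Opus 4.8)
The plan is to treat the two assertions separately and then recombine them through the decomposition of $\mathcal{T}_{\mathbf{S}}$-invariance into deterministic invariance and stochastic-identity invariance stated just before Theorem~\ref{theorem:invariancedet}. The positive part (that (\ref{eq::deter}) holds) I would settle by a direct substitution that produces an explicit, $\mathbf{X}$-independent $\mathbf{W}_{\mathbf{D}}$; the negative part (that (\ref{eq::sto}) fails) I would settle by computing $\mathbf{N}_i$ and exhibiting a single window in which the product does not vanish, with the obstruction isolated to the rotation-about-gravity direction.

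First, to verify (\ref{eq::deter}) I would write $\mathbf{R}_{\theta_1}:=\exp(\mathbf{g}\theta_1)$ and apply $\mathcal{T}_{\mathbf{D}}$ (that is, (\ref{eq::st}) with $\pmb{\epsilon}=\mathbf{0}$) to $\mathbf{X}\oplus\mathbf{e}$ defined through the ConEKF retraction (\ref{eq::stdVINSoplus}). Because $\mathcal{T}_{\mathbf{D}}$ left-multiplies the orientation by $\mathbf{R}_{\theta_1}$ while the retraction right-multiplies it by $\exp(\mathbf{e}_\theta)$, the two operations do not interfere and the attitude error passes through unchanged; meanwhile the additive errors on $\mathbf{v}$, $\mathbf{p}$, $\mathbf{f}$ are merely premultiplied by $\mathbf{R}_{\theta_1}$ and the two bias errors are untouched. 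Matching the outcome against $\mathcal{T}_{\mathbf{D}}(\mathbf{X})\oplus\mathbf{W}_{\mathbf{D}}\mathbf{e}$ block by block yields $\mathbf{W}_{\mathbf{D}}=\mathrm{blkdiag}(\mathbf{I}_3,\mathbf{R}_{\theta_1},\mathbf{R}_{\theta_1},\mathbf{I}_3,\mathbf{I}_3,\mathbf{R}_{\theta_1})$, which is invertible and manifestly independent of $\mathbf{X}$. Invoking Theorem~\ref{theorem:invariancedet} then gives the deterministic invariance.

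Second, for the stochastic side I would compute $\mathbf{N}_i$ in the stochastic-identity case $\pmb{\theta}=\mathbf{0}$, where $\mathcal{T}_{\mathbf{D}}$ reduces to the identity. Expanding $\mathcal{T}_{\mathbf{S}}(\hat{\mathbf{X}}_i)\ominus\hat{\mathbf{X}}_i$ to first order in $\pmb{\epsilon}=(\epsilon_1,\pmb{\epsilon}_2)$, using $\log(\hat{\mathbf{R}}_i^\intercal\exp(\mathbf{g}\epsilon_1)\hat{\mathbf{R}}_i)=\hat{\mathbf{R}}_i^\intercal\mathbf{g}\,\epsilon_1+o(\epsilon_1)$ together with $S(\mathbf{g})\mathbf{x}=-S(\mathbf{x})\mathbf{g}$, gives
\begin{equation}
\mathbf{N}_i=\begin{bmatrix} \hat{\mathbf{R}}_i^\intercal\mathbf{g} & \mathbf{0} \\ -S(\hat{\mathbf{v}}_i)\mathbf{g} & \mathbf{0} \\ -S(\hat{\mathbf{p}}_i)\mathbf{g} & \mathbf{I}_3 \\ \mathbf{0} & \mathbf{0} \\ \mathbf{0} & \mathbf{0} \\ -S(\hat{\mathbf{f}}_i)\mathbf{g} & \mathbf{I}_3 \end{bmatrix},
\end{equation}
whose last three columns are the translation directions and whose first column is the rotation-about-gravity direction $\mathbf{N}_i^{(r)}$. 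For the translation columns I would argue that $\mathbf{H}_{n+i+1}\pmb{\Phi}_{n+i}\cdots\pmb{\Phi}_i$ annihilates them at \emph{any} linearization point: along this direction the transition matrices keep the coupled pair $(\mathbf{e}_p,\mathbf{e}_f)$ equal and leave the rest zero, while $h$ depends on the state only through $\mathbf{R}^\intercal(\mathbf{f}-\mathbf{p})$, so the $\mathbf{e}_p$ and $\mathbf{e}_f$ blocks of $\mathbf{H}$ are equal and opposite and cancel. Thus the translation part of (\ref{eq::sto}) holds identically and contributes nothing to a violation.

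The failure must therefore come from $\mathbf{N}_i^{(r)}$, and this is where the main work lies. In an idealized filter whose Jacobians are all evaluated along one consistent trajectory this column lies in the nullspace, which is the classical VINS observability fact, so one would have $\mathbf{H}_{n+i+1}\pmb{\Phi}_{n+i}\cdots\pmb{\Phi}_i\,\mathbf{N}_i^{(r)}=\mathbf{0}$. In ConEKF-VINS, however, $\mathbf{N}_i$ is built from $\hat{\mathbf{X}}_i$ while the factors $\pmb{\Phi}_i,\dots,\pmb{\Phi}_{n+i}$ and $\mathbf{H}_{n+i+1}$ are evaluated at the successive post-update estimates, which differ because of the intervening measurement corrections; the cancellation then leaves a residual proportional to the mismatch between these linearization points, concretely the differences entering the $S(\cdot)\mathbf{g}$ blocks of position, velocity and feature. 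The hard part will be to carry $\mathbf{N}_i^{(r)}$ through the product $\pmb{\Phi}_{n+i}\cdots\pmb{\Phi}_i$ in closed form, which needs the explicit block structure of $\pmb{\Phi}$ and $\mathbf{H}$ and careful bookkeeping of which estimate enters each factor, and then to show this residual is generically nonzero already for a window containing a single update ($n\geq 1$). Once that one counterexample is in hand, the contrapositive of Theorem~\ref{theorem:invariancesto} rules out stochastic-identity invariance, and combining this with the deterministic invariance established above, through the two-property decomposition, yields the stated conclusion that the output of ConEKF-VINS is not invariant under $\mathcal{T}_{\mathbf{S}}$.
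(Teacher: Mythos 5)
Your proposal takes exactly the route of the paper's proof --- verify (\ref{eq::deter}) by exhibiting an explicit $\mathbf{X}$-independent $\mathbf{W}_{\mathbf{D}}$ and refute (\ref{eq::sto}) via $\mathbf{N}_i$ --- and in fact supplies more detail than the paper, whose proof is a two-line appeal to Theorems \ref{theorem:invariancedet} and \ref{theorem:invariancesto} with ``more details omitted''; your $\mathbf{W}_{\mathbf{D}}=\mathrm{blkdiag}(\mathbf{I}_3,\mathbf{R}_{\theta_1},\mathbf{R}_{\theta_1},\mathbf{I}_3,\mathbf{I}_3,\mathbf{R}_{\theta_1})$, your $\mathbf{N}_i$, and your argument that the translation columns are annihilated at any linearization point are all correct. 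The one step you flag as ``the hard part'' --- carrying the rotation-about-gravity column through $\mathbf{H}_{n+i+1}\pmb{\Phi}_{n+i}\cdots\pmb{\Phi}_i$ and showing the mismatch of linearization points leaves a generically nonzero residual --- is likewise left unproved in the paper, and the mechanism you identify for it is the standard (and correct) one.
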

\begin{proof}
For the ConEKF-VINS algorithm,	the invariance  to the deterministic unobservable transformation $\mathcal{T}_{\mathbf{D}}$ can be verified by using Theorem \ref{theorem:invariancedet}. The absence of invariance of  ConEKF-VINS to stochastic identity  transformations can be verified by using Theorem \ref{theorem:invariancesto}. More details are omitted here.
\end{proof}

\begin{remark}
The previous literatures \cite{ocvins1}\cite{FEJ-VINS}\cite{hesch2013camera}\cite{OC-VINS2} directly perform the observability analysis of the filter  on the \textit{linearized error-state model}. 
 However, Theorem \ref{theorem:invariancedet} and Theorem \ref{theorem:invariancesto}  clarifies the relationship between the  filter and  the \textit{linearized error-state model}.  
\end{remark}

\subsection{Consistency and invariance}
\label{Section::criterion}

The unobservability  in terms of stochastic unobservable transformation $\mathcal{T}_{\mathbf{S}} $ is a fundamental property of the VINS system. Therefore a consistent filter (as a system for the estimated state $\hat{\mathbf{X}}$) is expected to mimic this property, i.e.,  {\bf the  output of a consistent estimator is invariant under any stochastic unobservable transformation.}  The invariance to  to the deterministic transformation $\mathcal{T}_{\mathbf{D}} $  implies  that the estimates from the filter do \textit{not} depend on the selection of the (initial)  mean estimate of the unobservable variables, i.e., the IMU yaw angle and the IMU position, essentially.
Similarly, the invariance to stochastic identity transformation implies that the uncertainty w.r.t these unobservable variables does not affect the subsequent  \textit{mean} estimates.  
We can conclude that \textbf{the consistency of a filter is tightly coupled with the invariance to stochastic unobservable transformation}.  
A  filter that does not have the invariance property will gain the unexpected information and  produce inconsistent (overconfident) estimates.  Note that  ConEKF-VINS is a typical example due to the absence of the invariance property.

\section{The Proposed Method: 
	 RIEKF-VINS}
\label{Section::RIEKF}

In this section, we propose RIEKF-VINS by using a new uncertainty representation and prove it has the expected  invariance properties. We then apply  RIEKF-VINS to the MSCKF framework.

\subsection{The Uncertainty representation and Jacobians}
RIEKF-VINS also follows the  framework  (Alg. \ref{Alg::cd-EKF}). 
The uncertainty representation of  RIEKF-VINS is defined as below  
\begin{equation}
\begin{aligned}
\mathbf{X}=&\hat{\mathbf{X}} \oplus \mathbf{e}   \\
=&( \exp( \mathbf{e}_\theta ) \hat{\mathbf{R}} , \exp(\mathbf{e}_{\theta})\hat{\mathbf{v}}+J_r(-\mathbf{e}_\theta)\mathbf{e}_v, 
 \\
&\exp(\mathbf{e}_{\theta}) \hat{\mathbf{p}}+J_r(-\mathbf{e}_\theta)\mathbf{e}_p,  \hat{\mathbf{b}}_g+\mathbf{e}_{bg}, \hat{\mathbf{b}}_a+\mathbf{e}_{ba}, \\
&\exp(\mathbf{e}_{\theta})   \hat{\mathbf{f}}+J_r(-\mathbf{e}_\theta) \mathbf{e}_f )
\end{aligned}
\label{eq::RIEKFVINSoplus}
\end{equation}
where  
$ \mathbf{e} =  \begin{bmatrix}
\mathbf{e}_\theta, \mathbf{e}_v , \mathbf{e}_p , \mathbf{e}_{bg}, \mathbf{e}_{ba} , \mathbf{e}_f
\end{bmatrix} \sim \mathcal{N}(\mathbf{0},\mathbf{P})$  and the right Jacobian operator  $J_r(\cdot)$ is  given in (\ref{eq::Jr}). Note that this uncertainty representation intrinsically employs the Lie group so that  the recent result (Theorem 2 of \cite{iekfstable}) can be used to easily compute the Jacobians $\mathbf{F}$ and $\mathbf{G}$ of the propagation
\begin{equation}
\mathbf{F} =  \begin{bmatrix}
\mathbf{0}_{3,3} & \mathbf{0}_{3,3} & \mathbf{0}_{3,3} & -\hat{\mathbf{R}} &\mathbf{0}_{3,3} & \mathbf{0}_{3,3}\\
S(\mathbf{g}) & \mathbf{0}_{3,3} &\mathbf{0}_{3,3} & -S(\hat{\mathbf{v}})\hat{\mathbf{R}}  & -\hat{\mathbf{R}} & \mathbf{0}_{3,3}  \\
\mathbf{0}_{3,3} & \mathbf{I}_{3} &\mathbf{0}_{3,3} & -S(\hat{\mathbf{p}})\hat{\mathbf{R}} & \mathbf{0}_{3,3} &\mathbf{0}_{3,3}  \\
\mathbf{0}_{3,3} & \mathbf{0}_{3,3} & \mathbf{0}_{3,3} & \mathbf{0}_{3,3} &\mathbf{0}_{3,3} &\mathbf{0}_{3,3} \\
\mathbf{0}_{3,3} & \mathbf{0}_{3,3} & \mathbf{0}_{3,3} & \mathbf{0}_{3,3} &\mathbf{0}_{3,3} &\mathbf{0}_{3,3} \\
\mathbf{0}_{3,3} & \mathbf{0}_{3,3} & \mathbf{0}_{3,3} & \mathbf{0}_{3,3} &\mathbf{0}_{3,3} &\mathbf{0}_{3,3} \\
\end{bmatrix}
\label{eq::RIEKF_F}
\end{equation} 
and  
\begin{equation}
	\mathbf{G} =  \begin{bmatrix}
	\hat{\mathbf{R}} & \mathbf{0}_{3,3}  & \mathbf{0}_{3,3} & \mathbf{0}_{3,3}  \\
	S(\hat{\mathbf{v}})\hat{\mathbf{R}} &\mathbf{0}_{3,3}  & \hat{\mathbf{R}} & \mathbf{0}_{3,3} \\
	S(\hat{\mathbf{p}})\hat{\mathbf{R}} & \mathbf{0}_{3,3}  & \mathbf{0}_{3,3} & \mathbf{0}_{3,3} \\
		\mathbf{0}_{3,3} & \mathbf{I}_{3} & \mathbf{0}_{3,3} & \mathbf{0}_{3,3}  \\
		\mathbf{0}_{3,3} & \mathbf{0}_{3,3} & \mathbf{0}_{3,3} & \mathbf{I}_{3} \\
		S(\hat{\mathbf{f}})\hat{\mathbf{R}} & \mathbf{0}_{3,3}  & \mathbf{0}_{3,3} & \mathbf{0}_{3,3} \\
	\end{bmatrix}.
	\label{eq::RIEKF_G}
\end{equation}
The measurement Jacobian  is
\begin{equation}
	\mathbf{H}_{n+1} = \partial \mathfrak{h} ( \hat{\mathbf{f}}_{n+1,I}) \begin{bmatrix}
	\mathbf{0}_{3,6} &   -\hat{\mathbf{R}}_{n+1|n}^\intercal & \mathbf{0}_{3,6} & \hat{\mathbf{R}}_{n+1|n}^\intercal
	\end{bmatrix}
\end{equation}
where $ \hat{\mathbf{f}}_{n+1,I} =\hat{\mathbf{R}}_{n+1|n}^\intercal(\hat{\mathbf{f}}_{n+1|n} - \hat{\mathbf{p}}_{n+1|n} )    \in \mathbb{R}^3 $.

\subsection{Invariance proof}
 
\begin{theorem}
The output of RIEKF-VINS  is invariant under any stochastic unobservable transformation $\mathcal{T}_{\mathbf{S}}$.
\end{theorem}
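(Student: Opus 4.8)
The plan is to establish the two constituent invariance properties separately and then combine them through the decomposition noted after the invariance definition, namely that invariance under an arbitrary stochastic transformation $\mathcal{T}_{\mathbf{S}}$ is the conjunction of invariance under every deterministic transformation $\mathcal{T}_{\mathbf{D}}$ and invariance under every stochastic identity transformation. Accordingly, it suffices to verify the algebraic criteria (\ref{eq::deter}) of Theorem~\ref{theorem:invariancedet} and (\ref{eq::sto}) of Theorem~\ref{theorem:invariancesto} for the retraction (\ref{eq::RIEKFVINSoplus}) and its Jacobians (\ref{eq::RIEKF_F})--(\ref{eq::RIEKF_G}). The single structural fact that drives both verifications is that (\ref{eq::RIEKFVINSoplus}) is exactly the right-invariant group product $\mathbf{X}=\exp(\mathbf{e})\hat{\mathbf{X}}$, where $(\mathbf{R},\mathbf{v},\mathbf{p},\mathbf{f})$ is embedded in the matrix Lie group $SE_{3}(3)$, the biases form an additive $\mathbb{R}^{6}$ factor, and $\exp$ is the corresponding group exponential; this identification follows from the standard relation $J_{r}(-\mathbf{e}_{\theta})=J_{l}(\mathbf{e}_{\theta})$ (with $J_{l}$ the left Jacobian), which matches the velocity, position and feature blocks of (\ref{eq::RIEKFVINSoplus}) with the closed form of the exponential.

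For the deterministic criterion I would first observe that $\mathcal{T}_{\mathbf{D}}$ of Definition~\ref{def:un} acts as left multiplication by the fixed group element $\mathbf{G}_{\mathbf{D}}$ whose rotation block is $\exp(S(\mathbf{g})\theta_{1})$ and whose position and feature columns are $\pmb{\theta}_{2}$, with the identity action on the biases. Then
\begin{equation}
\mathcal{T}_{\mathbf{D}}(\hat{\mathbf{X}}\oplus\mathbf{e})=\mathbf{G}_{\mathbf{D}}\exp(\mathbf{e})\hat{\mathbf{X}}=\exp\!\big(\mathrm{Ad}_{\mathbf{G}_{\mathbf{D}}}\mathbf{e}\big)\,\mathbf{G}_{\mathbf{D}}\hat{\mathbf{X}}=\mathcal{T}_{\mathbf{D}}(\hat{\mathbf{X}})\oplus\big(\mathrm{Ad}_{\mathbf{G}_{\mathbf{D}}}\mathbf{e}\big),
\end{equation}
so (\ref{eq::deter}) holds exactly with $\mathbf{W}_{\mathbf{D}}=\mathrm{Ad}_{\mathbf{G}_{\mathbf{D}}}$ extended by the identity on the bias coordinates. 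Since the adjoint of a group element is linear, invertible and independent of $\hat{\mathbf{X}}$, this is precisely the hypothesis of Theorem~\ref{theorem:invariancedet}. I would emphasize here that the $\exp(\mathbf{e}_{\theta})\pmb{\theta}_{2}$ cross terms that obstruct a naive block-by-block expansion are absorbed automatically by the $S(\pmb{\theta}_{2})\mathbf{g}$ coupling inside the adjoint, which is exactly why the group formulation, rather than component bookkeeping, is the right tool.

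For the stochastic identity criterion I would compute $\mathbf{N}_{i}$ with $\pmb{\theta}=\mathbf{0}$. Using the right-invariant error, $\mathcal{T}_{\mathbf{S}}(\hat{\mathbf{X}}_{i})\ominus\mathcal{T}_{\mathbf{D}}(\hat{\mathbf{X}}_{i})$ reduces to $\log(\mathbf{G}_{\mathbf{S}}\mathbf{G}_{\mathbf{D}}^{-1})$, in which $\hat{\mathbf{X}}_{i}$ cancels; hence $\mathbf{N}_{i}$ is a \emph{constant} matrix, independent of the estimate, spanned by the yaw direction $\mathbf{e}_{\theta}=\mathbf{g}$ and the common translation $\mathbf{e}_{p}=\mathbf{e}_{f}$. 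Applying (\ref{eq::RIEKF_F}) to these columns gives $\mathbf{F}\mathbf{N}_{i}=\mathbf{0}$, because the only active coupling is $S(\mathbf{g})\mathbf{g}=\mathbf{0}$ and the $\mathbf{e}_{p}$ and $\mathbf{e}_{f}$ block columns of $\mathbf{F}$ vanish identically. As $\mathbf{N}_{i}$ is then a static solution of the transition ODE, $\pmb{\Phi}_{n+i}\cdots\pmb{\Phi}_{i}\mathbf{N}_{i}=\mathbf{N}_{i}$, and finally $\mathbf{H}_{n+i+1}\mathbf{N}_{i}=\mathbf{0}$ since the yaw column of $\mathbf{H}_{n+i+1}$ is zero and its position and feature contributions cancel as $-\hat{\mathbf{R}}^{\intercal}+\hat{\mathbf{R}}^{\intercal}$. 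This establishes (\ref{eq::sto}).

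The main obstacle, and the crux of the whole argument, is the exactness of $\mathbf{F}\mathbf{N}_{i}=\mathbf{0}$ together with the state independence of $\mathbf{N}_{i}$: these are exactly the two features that fail for ConEKF-VINS, where the analogous unobservable subspace depends on the estimated trajectory so that its propagation leaves the null space of $\mathbf{H}$ and spurious information is injected. Reducing both the transformation and the error to the group exponential is what makes these facts transparent; once they are in hand, the remaining steps are routine Jacobian computations, which I would relegate to the appendix. Combining the two verified criteria with the decomposition then completes the proof that RIEKF-VINS is invariant under every $\mathcal{T}_{\mathbf{S}}$.
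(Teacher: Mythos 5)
Your proposal is correct and follows essentially the same route as the paper: the same decomposition into invariance under deterministic transformations plus invariance under stochastic identity transformations, verified via the criteria of Theorem \ref{theorem:invariancedet} and Theorem \ref{theorem:invariancesto} with exactly the same $\mathbf{W}_{\mathbf{D}}$ (the paper writes out the adjoint blocks explicitly, with the $S(\pmb{\theta}_2)\delta\mathbf{R}$ couplings you obtain from $\mathrm{Ad}_{\mathbf{G}_{\mathbf{D}}}$) and the same constant $\mathbf{N}_i$ of (\ref{eq::N}). The only cosmetic difference is that you establish $\pmb{\Phi}_{n+i}\cdots\pmb{\Phi}_i\mathbf{N}_i=\mathbf{N}_i$ from $\mathbf{F}\mathbf{N}_i=\mathbf{0}$ rather than from the explicit columns of $\pmb{\Phi}_i$ in (\ref{eq::Phi}); both yield $\mathbf{H}_{n+i+1}\pmb{\Phi}_{n+i}\cdots\pmb{\Phi}_i\mathbf{N}_i=\mathbf{0}$.
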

\begin{proof}
For the retraction defined in (\ref{eq::RIEKFVINSoplus}), we have $\mathcal{T}_{\mathbf{D}}(\mathbf{X} \oplus \mathbf{e} ) =\mathcal{T}_{\mathbf{D}}( \mathbf{X} ) \oplus  \mathbf{W}_{\mathbf{D}}\mathbf{e}$ $\forall$ $\mathbf{X}$ and $\mathbf{e}$, where 
\begin{equation}
\mathbf{W}_{\mathbf{D}} = \begin{bmatrix}
\delta \mathbf{R} &  \mathbf{0}_{3,3} & \mathbf{0}_{3,3} & \mathbf{0}_{3,3} &\mathbf{0}_{3,3} & \mathbf{0}_{3,3}   \\
\mathbf{0}_{3,3}                   &  \delta \mathbf{R}  & \mathbf{0}_{3,3} & \mathbf{0}_{3,3} &\mathbf{0}_{3,3} & \mathbf{0}_{3,3}      \\
S(\pmb{\theta}_2)\delta \mathbf{R} & \mathbf{0}_{3,3} &\delta \mathbf{R}      & \mathbf{0}_{3,3} & \mathbf{0}_{3,3} &\mathbf{0}_{3,3}                        \\
\mathbf{0}_{3,3} & \mathbf{0}_{3,3} &\mathbf{0}_{3,3}    & \mathbf{I}_{3} & \mathbf{0}_{3,3} &\mathbf{0}_{3,3}                            \\	 
\mathbf{0}_{3,3} & \mathbf{0}_{3,3} &\mathbf{0}_{3,3}    & \mathbf{0}_{3,3} & \mathbf{I}_{3} &\mathbf{0}_{3,3}                            \\	
S(\pmb{\theta}_2)\delta \mathbf{R} & \mathbf{0}_{3,3} &\mathbf{0}_{3,3}    & \mathbf{0}_{3,3} & \mathbf{0}_{3,3} &\delta \mathbf{R}                           \\		                              
\end{bmatrix}
\end{equation}
and $\delta \mathbf{R} : = \exp(\mathbf{g}\theta_1)$. According to Theorem \ref{theorem:invariancedet},  the output of RIEKF-VINS is invariant  under any deterministic transformation  $\mathcal{T}_{\mathbf{D}}$.
On the other hand, for all  $i$, we have 
\begin{equation}
\pmb{\Phi}_{i} = 
\begin{bmatrix}
\mathbf{I}_3 &  * &  \mathbf{0}_{3,3} & *  & * &  \mathbf{0}_{3,3}    \\
\Delta t_{i} S(\mathbf{g}) & *  &  \mathbf{0}_{3,3} & *  & * &  \mathbf{0}_{3,3}  \\
\frac{\Delta t_{i}^2}{2} S(\mathbf{g}) & *  & \mathbf{I}_3  & *  & * & \mathbf{0}_{3,3} \\ 
\mathbf{0}_{3,3}   & * & \mathbf{0}_{3,3}  &  *  & * &  \mathbf{0}_{3,3}  \\
\mathbf{0}_{3,3}   & * & \mathbf{0}_{3,3}  &  * & * &  \mathbf{0}_{3,3}\\
\mathbf{0}_{3,3}   & * & \mathbf{0}_{3,3}  &  * & * & \mathbf{I}_3   \\
\end{bmatrix}
\label{eq::Phi}
\end{equation}
and 
\begin{equation}
\mathbf{N}_i =  \left.\frac{\partial  \mathcal{T}_{\mathbf{S}} (\hat{\mathbf{X}}_i ) \ominus  \mathcal{T}_{\mathbf{D}} (\hat{\mathbf{X}}_i )  }{\partial \pmb{\epsilon}  }\right\rvert_{\pmb{\epsilon}=\mathbf{0} } = \begin{bmatrix}
\mathbf{g} & \mathbf{0}_{3,3} \\
\mathbf{0}_{3,1}  &  \mathbf{0}_{3,3}\\
\mathbf{0}_{3,1}  &  \mathbf{I}_{3}\\
\mathbf{0}_{3,1}  &  \mathbf{0}_{3,3}\\
\mathbf{0}_{3,1}  &  \mathbf{0}_{3,3}\\
\mathbf{0}_{3,1}  &  \mathbf{I}_3
\end{bmatrix}
\label{eq::N}
\end{equation}
where $\Delta t_{i}:= t_{i+1}-t_{i}$ and the elements denoted by the notation $*$ are omitted here because these do not have any contribution to the computation of $\pmb{\Phi}_{i} \mathbf{N}_{i}$.
Note that $\pmb{\Phi}_{i} \mathbf{N}_{i}  = \mathbf{N}_{i+1} $
and $\mathbf{H}_{i+1}\mathbf{N}_{i+1} = \mathbf{0} $
 for all $ i$ and then we can easily verify that RIEKF-VINS  satisfies (\ref{eq::sto}). According to Theorem \ref{theorem:invariancesto}, the output of RIEKF-VINS is invariant under any stochastic identity transformation.
\end{proof}
\begin{remark}
The \textit{observability-constraint} filters proposed in  \cite{ocvins1}\cite{FEJ-VINS}\cite{hesch2013camera}\cite{OC-VINS2}  artificially modify the transition matrix $\pmb{\Phi}_n$ and the measurement Jacobian $\mathbf{H}_{n+1}$ to meet the condition (\ref{eq::sto}) such that they have the invariance to stochastic identity transformation. As a comparison, our proposed RIEKF-VINS employs the uncertainty representation (\ref{eq::RIEKFVINSoplus}) such that 
 the  ``natural"  matrices $\pmb{\Phi}_n$ and  $\mathbf{H}_{n+1}$  can elegantly  meet the condition (\ref{eq::sto}). 
\end{remark}

\subsection{Application to MSCKF}
A drawback of ConEKF-VINS and RIEKF-VINS is the expensive cost of maintaining  the covariance matrix for a number of landmarks. Especially,
RIEKF-VINS suffers from the  complexity quadratic to the number of landmarks in the propagation stage.
On the other hand, the well known MSCKF \cite{msckf} that has the complexity linear to the number of landmarks  inherits the inconsistency of ConEKF-VINS. 
One can see that  the uncertainty w.r.t the global yaw has effects on the mean estimates in the MSCKF algorithm, unexpectedly.
Due to the reasons above,  we integrate RIEKF-VINS into the MSCKF framework such that the modified algorithm has the linear complexity and better consistency.   
For convenience, we call the modified filter as RI-MSCKF.
In this subsection, we do not state all details of RI-MSCKF but  point out the modifications.

\subsubsection{System state and retraction}
The system state $\mathcal{X}_n$ at time-step $n$ in RI-MSCKF is 
\begin{equation}
\begin{aligned}
\mathcal{X}_n= ( \bar{\mathbf{X}}_n, \mathbf{C}_{t_1}, \cdots, \mathbf{C}_{t_j}, \cdots,   \mathbf{C}_{t_k},\cdots,\mathbf{C}_{t_m} )
\end{aligned}
\end{equation}
where
$\bar{\mathbf{X}}_n =  ( \mathbf{R}_n, \mathbf{v}_n, \mathbf{p}_n, \mathbf{b}_{g,n}, \mathbf{b}_{a,n}   ) $ denotes the IMU state at time-step $n$,
 $\mathbf{C}_{t_i} = ( \mathbf{R}^c_{t_i}, \mathbf{p}^c_{t_i}  )\in \mathbb{SE}(3)$ denotes the camera pose at the time $t_i$ ($t_i < t_n $).
According to the IMU state uncertainty in RIEKF-VINS, 
the uncertainty representation of $\mathcal{X}_n$ are defined as below
\begin{equation}
\begin{aligned}
\mathcal{X}_n &= \hat{\mathcal{X}}_n \oplus \mathbf{e}  \\
& = ( \hat{\bar{\mathbf{X}}}_n  \oplus_{imu} \mathbf{e}_{I}, \hat{\mathbf{C}}_{t_1} \oplus_{pose} \mathbf{e}_c^1,\cdots,  \hat{\mathbf{C}}_{t_m} \oplus_{pose} \mathbf{e}_c^m  )\\
\end{aligned}
\label{eq::RIMSCKFoplus}
\end{equation}
where 
 $ \mathbf{e} =  \begin{bmatrix}
\mathbf{e}_{I} , \mathbf{e}_{c}
\end{bmatrix} \in \mathbb{R}^{15+6m} \sim \mathcal{N}(\mathbf{0},\mathbf{P}_n)$,  $\mathbf{e}_{I} \in \mathbb{R}^{15} $ and $\mathbf{e}_c = \begin{bmatrix}
\mathbf{e}_c^1, \cdots, \mathbf{e}_c^m
\end{bmatrix} \in \mathbb{R}^{6m}$. Note that  $\oplus_{imu}$ and $\oplus_{pose}$ are given in Appendix \ref{subsection::formulation}.  

\subsubsection{Propagation}
The mean propagation $\mathcal{X}_{n+1|n}$ of RI-MSCKF also follows that  of MSCKF while  the covariance $\mathbf{P}_{n+1|n}$ is calculated by 
\begin{equation}
\mathbf{P}_{n+1|n} =\bar{\pmb{\Phi}}_n^\intercal \mathbf{P}_{n} \bar{\pmb{\Phi}}_n+ \bar{\mathbf{Q}}_{d,n}
\end{equation}
where $\bar{\pmb{\Phi}}_n = \textbf{Diag}(\pmb{\Phi}^{I}_n,\mathbf{I}_{6m} )$, $\bar{\mathbf{Q}}_{d,n} = \textbf{Diag}(\mathbf{Q}^I_{d,n},\mathbf{0}_{6m,6m} )$. 
Note that  $\pmb{\Phi}^{I}_n$ and $\mathbf{Q}^I_{d,n}$ are the matrices from the first 15 rows and 15 columns of $\pmb{\Phi}_n$ and
$\mathbf{Q}_{d,n}$, respectively, 
where $\pmb{\Phi}_n$ and $\mathbf{Q}_{d,n}$ are the matrices of RIEKF-VINS.

\subsubsection{State augment}
Once  a new image is captured at time-step $n+1$, we augment the system state and the covariance matrix as the following: 
\begin{equation}
\begin{aligned}
\hat{\mathcal{X}}_{n+1|n} \leftarrow  ( \hat{\mathcal{X}}_{n+1|n}, \hat{\mathbf{C}}_{t_{n+1}}  )
\end{aligned}
\end{equation}
\begin{equation}
\begin{aligned}
\mathbf{P}_{n+1|n} \leftarrow   \begin{bmatrix}
\mathbf{I}_{15+6m} \\
\mathbf{J}
\end{bmatrix} \mathbf{P}_{n+1|n} \begin{bmatrix}
\mathbf{I}_{15+6m} \\
\mathbf{J}
\end{bmatrix}^\intercal
\end{aligned} 
\end{equation}
where $\hat{\mathbf{C}}_{t_{n+1}} =  (  \hat{\mathbf{R}}_{{n+1|n}} \Delta \mathbf{R} , \hat{\mathbf{R}}_{n+1|n} \Delta \mathbf{p}+ \hat{\mathbf{p}}_{n+1|n}       )   \in\mathbb{SE}(3)$ is
 the \textit{mean} estimate of camera pose at the time $t_{n+1}$, $(\Delta \mathbf{R}, \Delta \mathbf{p})\in\mathbb{SE}(3)$  denotes the transformation from the camera to the IMU. Due to the new uncertainty representation (\ref{eq::RIMSCKFoplus}), the Jacobian  $\mathbf{J}$ needs to be changed as below
 \begin{equation}
 	\mathbf{J}= \begin{bmatrix}
 	\mathbf{I}_3 & \mathbf{0}_{3,3} & \mathbf{0}_{3,3} & \mathbf{0}_{3,6} & \mathbf{0}_{3,6m} \\
 	\mathbf{0}_{3,3} & \mathbf{0}_{3,3} & \mathbf{I}_3 & \mathbf{0}_{3,6} & \mathbf{0}_{3,6m}
 	\end{bmatrix}.
 \end{equation}

\subsubsection{Update}
Note that the landmark uncertainty  is coupled with the IMU pose in RIEKF-VINS. In RI-MSCKF, 
we  describe the landmark uncertainty coupled with the camera pose $\mathbf{C}_{t_j}$ that earliest captures the landmark  within the current system state $\mathcal{X}_n$ as below
\begin{equation}
(	\hat{\mathbf{C}}_{t_j}, \hat{\mathbf{f}}) \oplus \bar{\mathbf{e}}_c^j = (\hat{\mathbf{C}}_{t_j}\oplus_{pose} \mathbf{e}_c^j , \mathbf{e}^j_{\theta} \hat{\mathbf{f}}+ J_r(- \mathbf{e}^j_{\theta})\mathbf{e}_f  )
\label{eq::rimsckf-landmark}
\end{equation}
where  $\bar{\mathbf{e}}_c^j=[ \mathbf{e}_c^j ,  \mathbf{e}_f ]=  [ \mathbf{e}^j_{\theta},\mathbf{e}^j_{p}, \mathbf{e}_f  ]\in \mathbf{R}^9$.
From the uncertainty representations (\ref{eq::RIMSCKFoplus}) and (\ref{eq::rimsckf-landmark}), we can compute the linearized measurement model for the visual measurement at time-step $k$ ($t_1 \leq  t_k \leq t_n$). With a slight abuse of notations, the
 linearized measurement model can be represented as below
\begin{equation}
\begin{aligned}
	\pi( \hat{\mathbf{R}}^{c \intercal}_{t_k} (\hat{\mathbf{f}} - \hat{\mathbf{p}}^c_{t_k} ) )-\mathbf{z}_{k} & \approx \partial \pi_k \mathbf{H}^*_{xk} \mathbf{e}_{n+1|n} +\partial \pi_k \mathbf{H}^*_{fk} \mathbf{e}_{f}+\mathbf{V}_{k} \\
	\tilde{\mathbf{z}}_{k} & \approx \partial \pi_k \mathbf{H}^*_{xk} \mathbf{e}_{n+1|n} +\partial \pi_k \mathbf{H}^*_{fk} \mathbf{e}_{f} +\mathbf{V}_{k} \\
		\tilde{\mathbf{z}}_{k} & \approx   \mathbf{H}_{xk} \mathbf{e}_{n+1|n} + \mathbf{H}_{fk} \mathbf{e}_{f} +\mathbf{V}_{k}
		\label{eq::zHxHf}
\end{aligned}
\end{equation}
where   $\partial \pi_k:= \partial \pi ( \hat{\mathbf{R}}_{t_k}^{c \intercal } (\hat{f} -\mathbf{p}^c_{t_k} )  )$, $\mathbf{z}_k$ is the measurement captured at the time $t_k$. Here the matrices  $\mathbf{H}^*_{fk}$ and $ \mathbf{H}^*_{xk} $  are given by
\begin{equation}
	\mathbf{H}^*_{fk}  =  \hat{\mathbf{R}}^{c \intercal}_{t_{k}} \text{ }\text{ and}
\end{equation}
 \begin{equation}
		\begin{aligned}
		\mathbf{H}^*_{xk} & = \begin{bmatrix}
		\cdots  &	\cdots & \cdots & \mathbf{A}  &\cdots& \mathbf{B} & \cdots & \cdots
		\end{bmatrix}  \\ 
		\end{aligned}
\end{equation}
where $\mathbf{A}= \begin{bmatrix}
- \hat{\mathbf{R}}^{c \intercal}_{t_{k}} S(\hat{\mathbf{f}}),\mathbf{0}_{3,3}
\end{bmatrix}$ and $\mathbf{B}= \begin{bmatrix}
\hat{\mathbf{R}}^{c \intercal}_{t_{k}}S(\hat{\mathbf{f}}) , - \hat{\mathbf{R}}^{c \intercal}_{t_{k}}
\end{bmatrix}$.
Due to the absence of the covariance of landmark, RI-MSCKF also uses the null-space trick on (\ref{eq::zHxHf}) and  the resulting
 residual equation 
\begin{equation}
	\begin{aligned}
 \mathbf{H}_{fk}^\perp \tilde{\mathbf{z}}_{k} & \approx   \mathbf{H}_{fk}^\perp \mathbf{H}_{xk} \mathbf{e}_{n+1|n}   +\mathbf{H}_{fk}^\perp \mathbf{V}_{k}  \\
 \tilde{\mathbf{z}}'_{k} & \approx    \mathbf{H}'_{xk} \mathbf{e}_{n+1|n} +  \mathbf{V}'_{k}
	\end{aligned}
\end{equation}
 is employed for update.
\begin{remark}
RI-MSCKF does not need any extra computation to maintain the expected invariance while the \textit{observability-constraint} algorithms need to explicitly project the  measurement Jacobians onto the \textit{observable space}. 
\end{remark}

\section{Simulation and Experiment}
\label{Section::SimulationAndExp}

\subsection{Simulation Result}

In order to validate the theoretical contributions in this paper, we perform 50 Monte Carlo simulations and compare RI-MSCKF to MSCKF for a Visual-Inertial Odometry (VIO) scenario without loop closure.
 
\begin{figure}[htbp]
	\centering
    \includegraphics[width=1\linewidth]{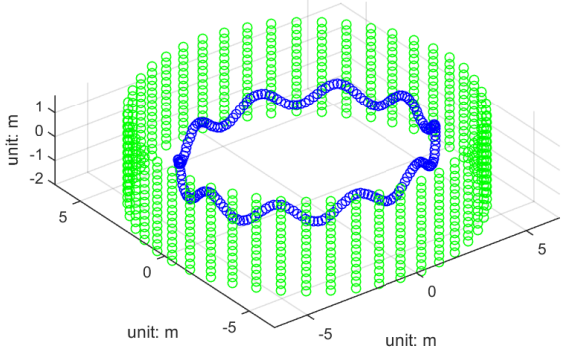}
	\caption{The simulated trajectory (blue circles) and landmarks (green stars).}
	\label{fig::VINS_simulation_circle}
\end{figure}

\begin{figure*}[htbp]
	\centering
	\includegraphics[width=7.0in]{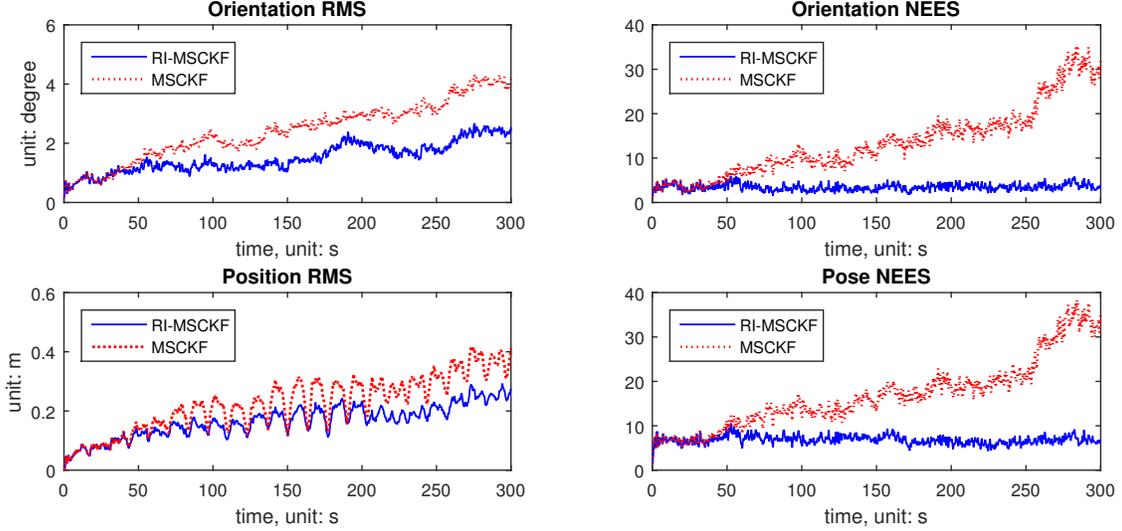}
	\caption{50 Monte Carlo simulation results. The proposed RI-MSCKF outperforms the original MSCKF, both in terms of accuracy (RMS) and consistency (NEES).}
	\label{fig::RIMSCFvsMSCKF}
\end{figure*}

Consider that a robot equipped with an IMU and a camera moves in a specific trajectory (average speed is $3$m/s) with the sufficient 6-DOFs motion, shown as the blue circles in Fig. \ref{fig::VINS_simulation_circle}. In this environment, 675 landmarks are distributed on the surface of a cylinder with radius $6.5m$ and height $4m$ shown as the green stars in Fig. \ref{fig::VINS_simulation_circle}.
Under the simulated environment, the camera is able to observe sufficiently overlapped landmarks between consecutive frames.
The standard deviation of camera measurement is set as $1.5$ pixels.
The IMU noise covariance $\mathbf{Q}$ is set as $\textbf{Diag}( 0.008^2\mathbf{I}_3,0.0004^2\mathbf{I}_3, 0.019^2\mathbf{I}_3, 0.05^2\mathbf{I}_3 )$ (the International System of Units).
 In each round of Monte Carlo simulation, the initial estimate is set as  the ground truth. And the measurements from IMU and camera are generated from the same trajectory with random noises.
 The maximal number of camera poses in the system state  of RI-MSCKF and MSCKF is set as 10. For robust estimation,  we use the landmarks for the update step only when  the landmarks are captured more than 5 times by the cameras within the current system state. 

The results of 50 Monte Carlo simulations are plotted in Fig. \ref{fig::RIMSCFvsMSCKF}. We use 
the root mean square  error (RMS) and the average normalized estimation error squared (NEES) to evaluate  both accuracy and consistency, respectively. Note that the \textit{ideal} NEES of orientation is 3 and that of pose is 6. 
As shown in Fig. \ref{fig::RIMSCFvsMSCKF}, RI-MSCKF clearly outperforms MSCKF especially for the consistency.
This phenomenon  can be explained as  RI-MSCKF has the invariance property to stochastic rotation about the gravitational direction and thus it can reduce the unexpected information gain when compared to MSCKF. In addition, the RMS of orientation and position of both filters increase with the time because the loop closure in this simulation is turned off.  

\subsection{Preliminary Experiment}

In order to validate the performance of the proposed RI-MSCKF algorithm under practical environments, we evaluate the algorithm on Euroc dataset \cite{euroc} which is collected on-board a macro aerial vehicle in the indoor environments. Without a delicated designed front-end which handles the feature extraction and tracking perfectly, we selected sequence \textit{V2\_01\_easy} in this section to demonstrate the performance of the RI-MSCKF algorithm where the features can be tracked correctly and thus making it perfect to compare our algorithm against the  MSCKF algorithm.

In this preliminary experiment, we designed a front-end based on ORB-SLAM \cite{mur2015orb} while only keeping the feature tracking sub-module. Without knowing the map points, new keyframe is inserted once there is $ n_{\text{frames}} $ frames have passed  since the insertion of the last keyframe. One sample image with the tracked landmarks is shown in Fig. \ref{fig::VINSsampled}.
 The uncertainty of the IMU sensor is set as instructed in the dataset. The maximal number of the camera poses in the system state is set as $ 10 $ and the minimal observed times for a landmark is set as $ 5 $. 

\begin{figure}[htbp]
	\centering
	\includegraphics[width=1\linewidth]{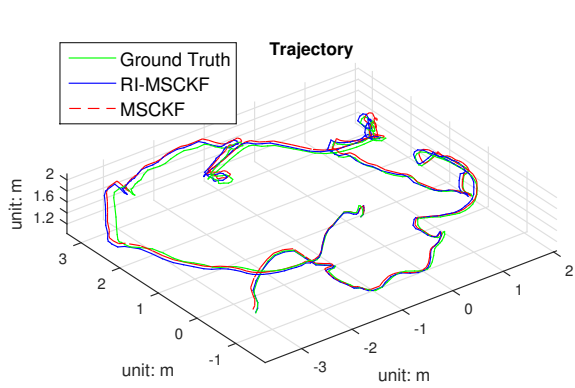}
	\caption{The estimated trajectories from MSCKF and RI-MSCKF in \textit{V2\_01\_easy}.}
	\label{fig::VINS_euroc}
\end{figure}

\begin{figure}[htbp]
	\centering
	\includegraphics[width=0.8\linewidth]{./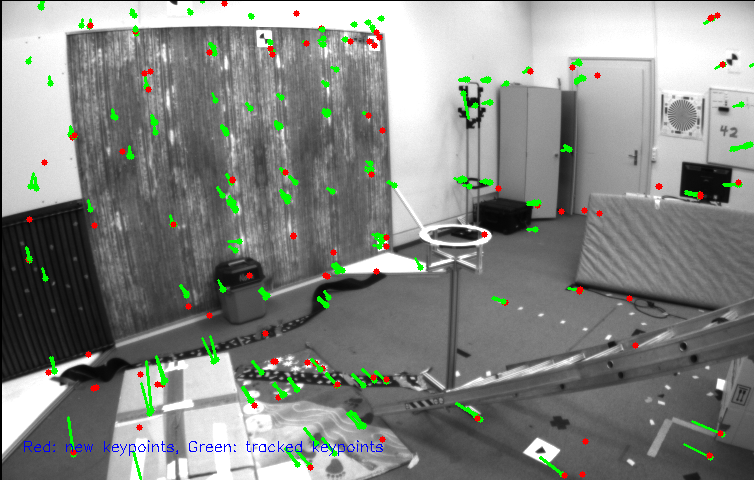}
	\caption{Sample image with landmarks in the experiment. The green dots represent the tracked key points and the red dots represents the new key points.}
	\label{fig::VINSsampled}
\end{figure}

Fig. \ref{fig::VINS_euroc} shows the estimated trajectories using MSCKF and RI-MSCKF. As shown in Fig. \ref{fig::VINS_euroc} and indicated in Fig.\ref{fig::euroc_position_rms}, RI-MSCKF shows the similar accuracy of position compared with MSCKF but also avoids the drift in the last few frames of the sequence, however,  RI-MSCKF shows significant better results in terms of orientation estimation accuracy compared with the original MSCKF algorithm. 
Even without a robust front-end to handle feature tracking perfectly, this preliminary experiment is able to demonstrate the superiority of RI-MSCKF compared with MSCKF algorithm in terms of the estimation accuracy.

\begin{figure}[htbp]
	\centering
	\includegraphics[width=1\linewidth]{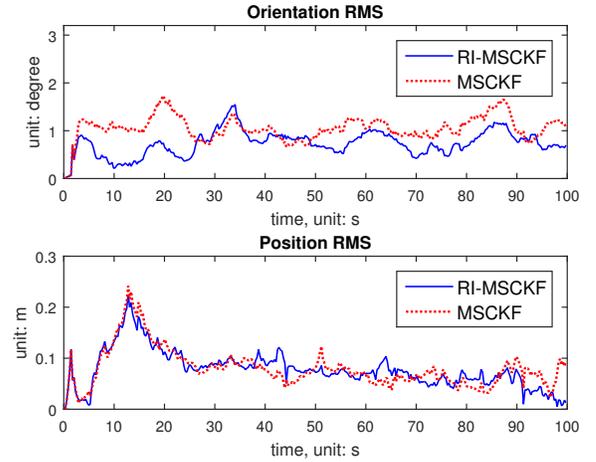}
	\caption{The RMS of orientation and  position estimate from MSCKF and RI-MSCKF in \textit{V2\_01\_easy}.}
	\label{fig::euroc_position_rms}
\end{figure}


\section{Conclusion And Future Work}
\label{Section::Conclusion}
In this work,  we proposed the  RIEKF-VINS algorithm and stressed that the consistency of a filter is tightly coupled with the invariance property.
We  proved that RIEKF-VINS has the expected invariance property while ConEKF-VINS does not satisfy  this property.
We also provided the methods to check whether a general EKF based filter has the invariance properties.
After theoretical analysis, we integrated RIEKF-VINS into the MSCKF framework such that the resulting RI-MSCKF algorithm 
can achieve better consistency relative to the original MSCKF.
Monte Carlo simulations  illustrated    the significantly improved performance of RI-MSCKF, especially for the consistency.
The real-world experiments also validated its improved accuracy. Future work includes improving the front end 
to achieve more robust estimation. We will also compare RIEKF-VINS to the \textit{observability-constraint} algorithms in both simulations and real-world experiments.

\appendix

\subsection{Some Formulas}
\label{subsection::formulation}
  \begin{equation}
\begin{aligned}
&\exp(\mathbf{y})  =\mathbf{I}_3+\frac{\sin (\| \mathbf{y} \| )}{\| \mathbf{y} \|} {S}(\mathbf{y})+ \frac{1-\cos(\| \mathbf{y} \|) }{\| \mathbf{y} \|^2}{S}^2(\mathbf{y})   \\
\end{aligned}
\label{eq::expS}
\end{equation}
\begin{equation}
	{J}_r(\mathbf{y}) =\mathbf{I}_3 -\frac{1-\cos (\| \mathbf{y} \| )}{\| \mathbf{y} \|^2} {S}(\mathbf{y})+ \frac{\| \mathbf{y} \|-\sin(\| \mathbf{y} \|) }{\| \mathbf{y} \|^3}{S}^2(\mathbf{y})
	\label{eq::Jr}
\end{equation}
for $\mathbf{y}\in\mathbb{R}^3$. 

The notation $\oplus_{imu}$ is defined as
\begin{equation}
\begin{aligned}
\bar{\mathbf{X}}  \oplus_{imu} \mathbf{e}_{I} = (\exp( \mathbf{e}_\theta ) {\mathbf{R}} , \exp(\mathbf{e}_{\theta}){\mathbf{v}}+J_r(-\mathbf{e}_\theta)\mathbf{e}_v, \\
\exp(\mathbf{e}_{\theta}) {\mathbf{p}}+J_r(-\mathbf{e}_\theta)\mathbf{e}_p,  {\mathbf{b}_g}+\mathbf{e}_{bg},{\mathbf{b}_a}+\mathbf{e}_{ba} )
\end{aligned}
\end{equation}
where $\bar{\mathbf{X}}=(\mathbf{R},\mathbf{v},\mathbf{p},\mathbf{b}_g,\mathbf{b}_a)$ and $\mathbf{e}_I= \begin{bmatrix}
\mathbf{e}_\theta,\mathbf{e}_v, \mathbf{e}_p, \mathbf{e}_{bg}, \mathbf{e}_{ba}
\end{bmatrix} \in \mathbb{R}^{15}$

The notation $\oplus_{pose}$ is defined as
\begin{equation}
\mathbf{C}\oplus_{pose} \mathbf{e}^i_c = (\exp(\mathbf{e}^i_\theta) \mathbf{R}, \exp(\mathbf{e}^i_{\theta}) {\mathbf{p}}+J_r(-\mathbf{e}^i_\theta)\mathbf{e}^i_p  )
\end{equation}  
where $\mathbf{C}=(\mathbf{R},\mathbf{p})\in\mathbb{SE}(3)$ and $\mathbf{e}^i_c=\begin{bmatrix}
\mathbf{e}^i_\theta, \mathbf{e}^i_p
\end{bmatrix}\in\mathbb{R}^6$. 
 
\subsection{Proof of Theorem \ref{theorem:invariancedet}}
\label{proof::determinsticVINS}

Here we only prove the sufficient condition. It is assumed that this filter satisfies:  for each deterministic  unobservable transformation $\mathcal{T}_{\mathbf{D}}$ there exists $\mathbf{W}_{\mathbf{D}}$ such that $\mathcal{T}_{\mathbf{D}}(\mathbf{X} \oplus \mathbf{e} ) =\mathcal{T}_{\mathbf{D}}( \mathbf{X} ) \oplus  \mathbf{W}_{\mathbf{D}}\mathbf{e}$. 

For any  estimate $(\hat{\mathbf{X}}_i,\mathbf{P}_i)$ at time-step $i$, we have another estimate $(\hat{\mathbf{Y}}_{i},\mathbf{P}y_{i})=( \mathcal{T}_{\mathbf{D}} (  \hat{\mathbf{X}}_i),  \mathbf{W}_{\mathbf{D}} \mathbf{P}_i  \mathbf{W}_{\mathbf{D}}^\intercal    )$  after applying the deterministic transformation $\mathcal{T}_{\mathbf{D}}$. 
After one step propagation, we have $(\hat{\mathbf{X}}_{i+1|i},\mathbf{P}_{i+1|i})$ and $(\hat{\mathbf{Y}}_{i+1|i},\mathbf{P}y_{i+1|i})$ where $ \hat{\mathbf{Y}}_{i+1|i}= \mathcal{T}_{\mathbf{D}} (  \hat{\mathbf{X}}_{i+1|i} ) $  and $ \mathbf{Py}_{i+1|i}=  \mathbf{W}_{\mathbf{D}} \mathbf{P}_{i+1|i}  \mathbf{W}_{\mathbf{D}}^\intercal  $. Note that $\mathbf{H}y_{i+1} = \mathbf{H}_{i+1} \mathbf{W}_{\mathbf{D}}^{-1} $ and 
then it is easy to obtain $\mathbf{K}y=\mathbf{W}_{\mathbf{D}}   \mathbf{K}  $, resulting in the mean estimate $\hat{\mathbf{Y}}_{i+1}$  as below
\begin{equation}
\begin{aligned}
\hat{\mathbf{Y}}_{i+1}&= \hat{\mathbf{Y}}_{i+1|i}\oplus \mathbf{K}_y \tilde{\mathbf{z}}\\
& =\mathcal{T}_{\mathbf{D}}( \hat{\mathbf{X}}_{i+1|i}  )  \oplus  \mathbf{W}_{\mathbf{D}} \mathbf{K}\tilde{\mathbf{z}} \\
&= \mathcal{T}_{\mathbf{D}} ( \hat{\mathbf{X}}_{i+1|i} \oplus  \mathbf{K} \tilde{\mathbf{z}}     ) \\
& =  \mathcal{T}_{\mathbf{D}}( \hat{\mathbf{X}}_{i+1}  ) 
\end{aligned}
\end{equation}
 The covariance matrix after update  becomes 
$ \mathbf{P}y_{i+1}= (\mathbf{I}- \mathbf{K}_y\mathbf{H}y_{i+1} )\mathbf{P}y_{i+1|i}=    \mathbf{W}_{\mathbf{D}} \mathbf{P}_{i+1}  \mathbf{W}_{\mathbf{D}}^\intercal$.
In all, $\hat{\mathbf{Y}}_{i+1}= \mathcal{T}_{\mathbf{D}}( \hat{\mathbf{X}}_{i+1}  )  $  and  $ \mathbf{P}y_{i+1}= \mathbf{W}_{\mathbf{D}} \mathbf{P}_{i+1}  \mathbf{W}_{\mathbf{D}}^\intercal  $. By mathematical induction, we can see 
$  \hat{\mathbf{Y}}_n= \mathcal{T}_{\mathbf{D}} (  \hat{\mathbf{X}}_n ) $ for $n\geq i$ and hence 
the output of  this filter  
 is invariant under any deterministic  transformation $ \mathcal{T}_{\mathbf{D}} $.

\subsection{Proof of Theorem \ref{theorem:invariancesto}}
\label{proof::stochasticVINS}

Here we only prove the sufficient condition.  It is assumed that this filter satisfies:
$ \mathbf{H}_{n+i+1}\pmb{\Phi}_{n+i}\pmb{\Phi}_{n+i-1}\cdots\pmb{\Phi}_{i} \mathbf{N}_i = \mathbf{0} \text{ } \forall\text{ } n\text{ and } i\geq 0$. 

For any  estimate $(\hat{\mathbf{X}}_i,\mathbf{P}_i)$ at time-step $i$, we have another estimate $(\hat{\mathbf{Y}}_{i},\mathbf{P}y_{i})=(   \hat{\mathbf{X}}_i ,  \mathbf{P}_i +  \mathbf{N}_i \pmb{\Sigma}  \mathbf{N}_i^\intercal   )$  after applying the stochastic identify transformation $\mathcal{T}_{\mathbf{S}}$ where $\mathbf{S}= ( \mathbf{0}, \pmb{\epsilon} )$ and 
$\pmb{\epsilon} \sim \mathcal{N}(\mathbf{0}, \pmb{\Sigma})$. After one step propagation, 
we have  $(\hat{\mathbf{X}}_{i+1|i},\mathbf{P}_{i+1|i})$ and $(\hat{\mathbf{Y}}_{i+1|i},\mathbf{P}y_{i+1|i})= (\hat{\mathbf{X}}_{i+1|i},\mathbf{P}_{i+1|i}+ \pmb{\Phi}_i  \mathbf{N}_i \pmb{\Sigma}  \mathbf{N}_i^\intercal   \pmb{\Phi}_i^\intercal )   $. Note that  $ \mathbf{H}_{i+1} \pmb{\Phi}_i  \mathbf{N}_i =\mathbf{0}  $,  we can easily get 
 $(\hat{\mathbf{Y}}_{i+1},\mathbf{P}y_{i+1})= (\hat{\mathbf{X}}_{i+1},\mathbf{P}_{i+1}+ \pmb{\Phi}_i  \mathbf{N}_i \pmb{\Sigma}  \mathbf{N}_i^\intercal   \pmb{\Phi}_i^\intercal     ) $. By mathematical induction, 
 we have $(\hat{\mathbf{Y}}_{n},\mathbf{P}y_{n})= (\hat{\mathbf{X}}_{n},\mathbf{P}_{n}+ \pmb{\Phi}_{n} \cdots \pmb{\Phi}_i  \mathbf{N}_i \pmb{\Sigma}  \mathbf{N}_i^\intercal   \pmb{\Phi}_{i}^\intercal \cdots \pmb{\Phi}_n^\intercal    ) $. Therefore,
 the output of this filter is invariant under any stochastic identify transformation.


\bibliographystyle{IEEEtran}
\bibliography{ref.bib}


\end{document}